\providecommand{\keywords}[1]
{
	\small	
	\textbf{\textit{Keywords---}} #1
}
\newtheorem{definition}{Definition}
\newtheorem{theorem}{Theorem}
\theoremstyle{definition}
\newtheorem{example}{Example}
\newtheorem{property}{Property}
\newtheorem{corollary}{Corollary}
\newcommand*\samethanks[1][\value{footnote}]{\footnotemark[#1]}
\newcommand\fract[5]{\ensuremath{\prescript{#1}{#3}{\mathcal{#5}}_{#4}^{#2}}}
\begin{document}
	\flushbottom
	
 \title{Accelerating Fractional PINNs using Operational Matrices of Derivative}

\author[1]{Tayebeh Taheri \thanks{Email: \{ttaherii1401, alirezaafzalaghaei\}@gmail.com}}
 
\author[1]{Alireza Afzal Aghaei\samethanks}	
 \author[1,2,3]{Kourosh Parand \thanks{Email: k\_parand@sbu.ac.ir, Corresponding author}}

	\affil[1]{\small{Department of Computer and Data Sciences, Faculty of Mathematical Sciences, Shahid Beheshti University, G.C. Tehran, Iran}}
	\affil[2]{\small{Department of Cognitive Modeling, Institute for Cognitive and Brain Sciences, Shahid Beheshti University, G.C. Tehran, Iran}}
	\affil[3]{\small{Department of Statistics and Actuarial Science, University of Waterloo, Waterloo, Canada}}

	\maketitle

	\begin{abstract}
This paper presents a novel operational matrix method to accelerate the training of fractional Physics-Informed Neural Networks (fPINNs). Our approach involves a non-uniform discretization of the fractional Caputo operator, facilitating swift computation of fractional derivatives within Caputo-type fractional differential problems with $0<\alpha<1$. In this methodology, the operational matrix is precomputed, and during the training phase, automatic differentiation is replaced with a matrix-vector product. While our methodology is compatible with any network, we particularly highlight its successful implementation in PINNs, emphasizing the enhanced accuracy achieved when utilizing the Legendre Neural Block (LNB) architecture. LNB incorporates Legendre polynomials into the PINN structure, providing a significant boost in accuracy. The effectiveness of our proposed method is validated across diverse differential equations, including Delay Differential Equations (DDEs) and Systems of Differential Algebraic Equations (DAEs). To demonstrate its versatility, we extend the application of the method to systems of differential equations, specifically addressing nonlinear Pantograph fractional-order DDEs/DAEs. The results are supported by a comprehensive analysis of numerical outcomes.

	\end{abstract}
	
	\keywords{Physics-informed neural networks, nonlinear differential equations, Fractional derivative, Operational matrix}

 	\section{Introduction}
Fractional calculus (FC) \cite{west2016fractional}, a specialized field in mathematical analysis, expands traditional differentiation and integration by embracing non-integer orders.  Fractional calculus has gained substantial prominence over four decades. As an evolving arena, FC anticipates the introduction of numerous models for real-world applications in science and engineering, particularly in areas where nonlocality plays a crucial role. This burgeoning field demonstrates profound applications across diverse scientific disciplines, extending its reach into the dynamics of the complex real world, with new ideas implemented and tested using real data. Here, some of these applications are mentioned. In the domain of underwater sediment and biomedical applications, fractional derivative models prove valuable for better understanding wave propagation, providing insights into absorption mechanisms grounded in the relaxation processes observed in materials such as polymers \cite{pandey2016linking}. In \cite{zeng2014optimal}, a Continuous Time Random Walk (CTRW) optimal search framework is introduced for specific targets with unknown locations. This approach employs fractional calculus techniques to ascertain optimal distributions for both search length and waiting time. 

Fractional order systems, in contrast to memoryless integer-order dynamic systems, feature long memory and the application of fractional calculus improves and extends established control methods and strategies \cite{monje2010fractional}. Its applications in diverse scientific and engineering fields, including image processing \cite{zhang2012adaptive}, underscore its recognition, providing valuable tools for solving differential equations,  integral differential equations, partial differential equations (PDEs), differential-algebraic equations (DAEs), and delay differential equations (DDEs).  The utilization of discrete fractional calculus emerges as a practical tool to address engineering challenges in discrete time or space structures, particularly those entailing delays inherent in delay differential equations.  This approach provides stability theory for fractional difference equations, facilitating long-term control and mitigating potential errors linked to numerical discretization in continuous fractional calculus \cite{wu2018finite}.

DDEs represent a distinct class of differential equations where the unknown functions depend on previous time states, setting them apart from conventional Ordinary Differential Equations (ODEs). DDEs have diverse applications spanning mathematics, biological systems, engineering, physics, economics and finance, chemical reactions, ecology, communication networks, weather and climate, and medicine \cite{smith2011introduction, driver2012ordinary, rihan2021delay}. For example, in biological systems, DDEs are used to model the interactions and time delays in predator-prey systems, disease spread, and ecological systems. Also, DDEs are employed to describe neural oscillations and synchronization in the brain, considering the time delays in signal transmission between neurons. In many of these applications, DDEs provide a more accurate representation of real-world phenomena compared to ODEs because they can capture the impact of time delays on system dynamics. In addition to standard delay differential equations, there are several other types of delay equations used in engineering to model systems with time delays. Pantograph delay differential equations (PDDEs) introduce a more intricate structure by incorporating both forward and backward time delays in the equation. PDDEs have arisen in the modeling of pantograph control systems, cell growth models \cite{van2011mellin}, population growth \cite{aiello1992analysis}, and electric locomotive \cite{ockendon1971dynamics}. Fractional differential equations \cite{dzielinski2022fractional} give significant meanings in physical contexts,  hence empowering researchers to capture more realistic and nuanced behavior, as demonstrated in vibration theory \cite{huseynov2021class}, and biological systems with memory \cite{barros2021memory}.

  Dealing with delay differential equations can be intricate due to the time delays involved. The primary methods employed to solve these equations include numerical, analytical, and semi-analytical approaches.
   One of the most widely recognized analytical techniques for solving DDEs is the method of steps \cite{bauer2013solving}. This involves breaking the DDE into small time intervals and solving it as a sequence of ODEs at discrete time points. Laplace transform techniques, as demonstrated in the work by \cite{yang2023stability}, is another analytical method. Perturbation-iteration methods, as discussed in the study by Bahsi \cite{bahcsi2015numerical}, fall under the category of semi-analytical approaches. The analytical and semi-analytical technique enhances precision and stability while potentially incurring higher computational costs.
   
   To address the challenges posed by complex systems featuring time delays, numerical methods for solving delay differential equations play a crucial role. A variety of numerical techniques have been developed to efficiently handle DDEs. Finite difference methods \cite{johnson2023investigation} constitute a category of numerical techniques utilized for the solution of differential equations, involving the approximation of derivatives through finite differences. Examples of finite difference methods encompass Runge-Kutta techniques \cite{senu2022numerical}, Euler method \cite{ ngoc2022stability}, and Linear multistep approaches, as presented in Shaalini's work \cite{shaalini2021new}.

   Spectral methods belong to a group of numerical techniques that make use of spectral (or Fourier) representations of the solution. They prove to be particularly effective when addressing DDEs that display periodic or oscillatory characteristics. Illustrative instances of this approach for solving DDEs involve collocation techniques \cite{sriwastav2023numerical}, methods utilizing wavelets in conjunction with collocation \cite{faheem2022wavelet}, the application of the Bernoulli collocation method \cite{adel2020solving}, and the utilization of the transferred Legendre pseudospectral method \cite{jafari2021new}. The performance of spectral methods is profoundly influenced by the selection of basis functions, including Fourier, Chebyshev, or Legendre, as well as the initial values, introducing challenges in the decision-making process. Finite element methods, such as the Chebyshev Tau method \cite{abd2022hypergeometric} and the Chebyshev discretization scheme \cite{liu2019stability}, along with meshless methods like radial basis functions (RBFs) \cite{ABBASZADEH202144} and meshless local Petrov-Galerkin techniques \cite{TAKHTABNOOS201767}, represent promising numerical approaches for solving DDEs. Meshless methods often rely on local approximations, such as radial basis functions or moving least squares, leading to difficulties in capturing global features of the solution. This limitation can result in reduced accuracy, particularly when dealing with DDEs that exhibit complex and non-local dynamics.

   While DDEs capture systems with temporal dependencies, differential-algebraic equations (DAEs) are a class of differential equations that describe dynamic systems with algebraic constraints, and both play a crucial role in modeling complex phenomena. Their applications are evident in a wide spectrum of disciplines, from aerospace engineering and biomechanics to geophysics and robotics. To effectively address the diverse applications of DAEs across various domains, it is essential to employ specialized numerical methods for their solution. Instances of these techniques comprise a variety of methods, including the Adomian Decomposition Method (ADM) \cite{celik2006solution}, Pade series \cite{ccelik2003numerical}, homotopy perturbation \cite{soltanian2010solution}, and homotopy analysis technique \cite{zurigat2010analytical}, among others, all contributing to the effective solution of DAEs.  Common issues and limitations associated with numerical methods for DDEs and DAEs include concerns related to stability, convergence, accuracy, and computational demands.
   
Machine learning is gaining traction in solving differential equations, offering flexibility and efficacy compared to traditional methods. Its adaptability and ability to learn complex patterns result in more accurate solutions and faster predictions with reduced computational costs, exhibiting robust generalization across various data distributions and reduced sensitivity to discretization choices. In specific contexts, such as solving fractional-order differential equations, it has exhibited superior performance through the utilization of techniques like least-squares support vector regression \cite{taheri2023bridging}. In applying machine learning to solve differential equations, a crucial aspect lies in the specialized use of artificial neural networks (ANN), showcasing their effectiveness across a diverse range of differential equation types. Highlighting instances of artificial neural network applications in solving differential equations, Rahimkhani and Ordokhani \cite{rahimkhani2021orthonormal} introduced the orthonormal Bernoulli wavelets neural network methodology to address the Lane–Emden equation. Sabir et al. \cite{sabir2022intelligent} applied an optimization approach for ANNs using a blend of genetic algorithms and sequential quadratic programming to resolve multi-pantograph delay differential equations. Khan et al. \cite{khan2022design} devised intelligent networks with backpropagation, while Panghal et al. \cite{panghal2022neural} implemented feed-forward ANNs for solutions in delay differential systems. Ye et al. \cite{ye2023slenn} innovated with the shifted Legendre neural network method, employing an extreme learning machine algorithm (SLeNN-ELM) for solving fractional differential equations with constant and proportional delays. Zhang et al. \cite{zhang2023sync} suggested an approach utilizing neural networks to solve delay differential equations and derive control policies from specific regions of controlled systems. Ruthotto et al. \cite{ruthotto2020deep} adopted a deep convolutional neural network to solve partial differential equations.

Incorporating orthogonal functions as activation functions in neural networks enhances their effectiveness in solving differential equations. Methods like the orthonormal Bernoulli wavelets neural network and the shifted Legendre neural network illustrate the synergy between spectral activation functions and the networks' capacity to capture complex mathematical relationships.
In diverse applications, orthogonal polynomials like Legendre, Chebyshev, and Jacobi are effectively used as activation functions in neural networks. They contribute to the accuracy of solving various integral and partial differential equations, showcasing the versatility of employing specialized functions in computational methodologies \cite{hajimohammadi2021fractional, parand2023neural}.

Recently, a deep neural network framework known as Physics-Informed Neural Networks (PINNs) has been introduced by Raissi et al. \cite{raissi2019physics}. The PINNs methodology combines the power of neural networks with the principles of physics, offering a versatile and effective tool for addressing a wide range of differential equations, including applications to solving stationary PDEs \cite{peng2022rpinns}, nonlinear integro-differential equations \cite{yuan2022pinn},  heat transfer problems \cite{cai2021physics}, and dispersive PDEs \cite{bai2021physics}. Furthermore, PINNs have demonstrated success in solving fractional differential equations, addressing challenges in areas such as integro-differential equations of fractional order \cite{pang2019fpinns}, fractional differential equations revealing time-dependent parameters and data-driven dynamics \cite{kharazmi2021identifiability}, and Hausdorff derivative Poisson equations \cite{wu2023physics}. In these studies, the researchers employ discretization methods for fractional derivatives. Unlike integer calculus, automatic differentiation cannot be applied to fractional calculus due to the inapplicability of standard rules in this context.

In this paper, we introduce an alternative non-uniform discretization approach for Caputo-based fractional differential equations, specifically tailored for integration into neural networks. Notably, we achieve the efficient computation of the Caputo fractional derivative ($\alpha$) through the utilization of operational matrices. Then, by utilizing a special extension of PINNs, namely Legendre Deep Neural Network (LDNN) our method adeptly tackles the complexities inherent in fractional differential equations. Overall,  the core aim of this article is to:
  \begin{itemize}
  \item{ Proposed a physics-informed neural network for solving fractional-order differential equations.}
  \item{ Proposed non-uniform discretization approach for Caputo-based FDEs.}
  \item{Utilize hidden layers within this network, incorporating Legendre and Chebyshev polynomials as activation functions.}
  \item{Develop an efficient Artificial Neural Network for solving:
  \begin{itemize}
   \item{Pantograph delay differential equations.}
   \item{Nonlinear delay differential equations.}
   \item{Fractional delay differential equations.}
   \item{Differential-algebraic equations.}
    \item{Linear fractional differential-algebraic equations.}
  \end{itemize}
  }

  \end{itemize}
  
The rest of this paper is organized as follows: section \ref{sec2} outlines techniques for calculating both the fractional derivative and the operational matrix of the Caputo fractional derivative. In the methodology section \ref{sec3}, we clarified the proposed neural network method and its structure. To facilitate a comprehensive understanding of the procedure, we introduce our network to specifically address the resolution of differential equations with fractional orders. We have assessed the efficacy of our method by employing it to approximate various problems in the numerical examples section \ref{sec4}. Finally, an overview of the proposed method is presented in the conclusion section \ref{sec5}.

 	\section{Fractional Calculus} \label{sec2}
 	In this section, we present methods for computing the fractional derivative and the operation matrix of the Caputo fractional derivative.

 	\begin{definition}
 	  Let $f$ be a continuous function on the interval $[a,b]$, the Reimann-Liouville fractional derivative of arbitrary order $\alpha$ is defined as:
 	  \begin{equation}
 	       \fract{}{\alpha}{}{}{D}f(x)=\frac{1}{\Gamma(k-\beta)}\frac{d^k}{dx^k}\int_{a}^{x}(x-t)^{k-\alpha-1}f(t)dt, \quad k-1\leq \alpha \leq k,
 	  \end{equation}
 	where $\Gamma(.)$ is the Gamma function denoted by the following integral:
 	    \begin{equation}
 	  \Gamma(z)=\int_{0}^{\infty} e^{-t} t^{z-1}dt,
    \end{equation}
 which is convergent on the right half of the complex plane $(Re(z)\geq 0)$.
 	  \end{definition}

 	   \begin{definition}
 	     Caputo's definition of the fractional order of $\alpha$  is expressed as follows:
 	     \begin{equation}
 	           \fract{C}{\alpha}{}{}{D}f(x)=\frac{1}{\Gamma(n-\alpha)}\int_{a}^{x} \frac{f^{(n)}(t)dt}{(x-t)^{\alpha+1-n}}, \quad n-1<\alpha<n.
 	     \end{equation}
 	   \end{definition}

\begin{property} \label{property}
  Let's denote the fractional order as $q$, where $q=n+\alpha$ and $n$  is the integer part of $q$.  The fractional derivative of order $q$ can be expressed as:
  \[ D^q[f(t)] = D^n[D^{1-\alpha}[f(t)]], \]
here, \(D^n\) represents the derivative of integer order \(n\), and \(D^{1-\alpha}\) represents the fractional derivative of order \(1-\alpha\).

\end{property}
\begin{theorem}
    Let $0 < \alpha < 1$ and the interval $[t_0, t_n]$ is discretized to $n+1$ points, $0 = t_0 < t_1 < \dots < t_n$. Then the following linear combination approximates the Caputo fractional derivative of order $\alpha$:
    \begin{equation}
        {}_{0}^{C}\mathcal{D}_{t_n}^{\alpha}f{(t)} = \sum_{k=0}^n\omega_k f(t_k),
    \end{equation}
    where $f$ is the desired function and $\omega_k$ are real-valued weights.
    \label{thm:L1}
\end{theorem}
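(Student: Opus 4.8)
The plan is to begin from the integral definition of the Caputo derivative for $0<\alpha<1$, namely ${}_{0}^{C}\mathcal{D}_{t_n}^{\alpha}f(t)=\frac{1}{\Gamma(1-\alpha)}\int_{t_0}^{t_n}(t_n-\tau)^{-\alpha}f'(\tau)\,d\tau$, and to replace $f$ on the mesh $t_0<t_1<\dots<t_n$ by its continuous piecewise-linear interpolant. On each subinterval $[t_k,t_{k+1}]$ this interpolant has the constant slope $\dfrac{f(t_{k+1})-f(t_k)}{t_{k+1}-t_k}$, so the integral decomposes into a finite sum of elementary integrals $\int_{t_k}^{t_{k+1}}(t_n-\tau)^{-\alpha}\,d\tau$. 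Since $\alpha<1$ the exponent $-\alpha$ is integrable even on the terminal subinterval, where $\tau\to t_n$, so each such integral evaluates in closed form to $\dfrac{(t_n-t_k)^{1-\alpha}-(t_n-t_{k+1})^{1-\alpha}}{1-\alpha}$; I would absorb this together with the prefactor into coefficients $b_k=\dfrac{(t_n-t_k)^{1-\alpha}-(t_n-t_{k+1})^{1-\alpha}}{\Gamma(2-\alpha)}$, using $\Gamma(2-\alpha)=(1-\alpha)\Gamma(1-\alpha)$.

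This already produces ${}_{0}^{C}\mathcal{D}_{t_n}^{\alpha}f(t)\approx\sum_{k=0}^{n-1}b_k\,\dfrac{f(t_{k+1})-f(t_k)}{t_{k+1}-t_k}$. The remaining step is a purely algebraic regrouping, i.e.\ discrete summation by parts: expand each difference and collect the coefficient multiplying a fixed nodal value $f(t_j)$. This gives $\omega_0=-\dfrac{b_0}{t_1-t_0}$, $\omega_n=\dfrac{b_{n-1}}{t_n-t_{n-1}}$, and $\omega_j=\dfrac{b_{j-1}}{t_j-t_{j-1}}-\dfrac{b_j}{t_{j+1}-t_j}$ for $1\le j\le n-1$. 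Because every $t_k$ is real with $t_k<t_n$ for $k<n$ and $0<\alpha<1$, each $b_k$ and each mesh spacing $t_{k+1}-t_k$ is a positive real number, so the resulting $\omega_k$ are real-valued weights, which is exactly the assertion.

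The part I expect to demand the most care is not any individual computation but the combination of the regrouping with the handling of the terminal index $k=n-1$: the factor $(t_n-t_n)^{1-\alpha}$ must be read as $0$, and one must verify that no $f(t_n)$ contribution is lost or double-counted when the doubly indexed sum is flattened into a single one. If one wanted to strengthen the statement with an explicit error bound rather than the bare approximation claim, the obstacle would move to estimating, on each subinterval, the integral of $(t_n-\tau)^{-\alpha}$ against the discrepancy between $f'$ and the local difference quotient via a Taylor remainder for $f$; the near-singular terminal subinterval, where $t_n-\tau$ is small, would then require a separate and more delicate estimate. For the approximation statement as formulated here, however, the piecewise-linear construction together with the regrouping is enough.
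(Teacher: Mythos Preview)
Your proposal is correct and follows essentially the same route as the paper: split the Caputo integral over the subintervals, replace $f'$ by the piecewise-linear difference quotient, evaluate the resulting elementary integrals in closed form, absorb the $(1-\alpha)\Gamma(1-\alpha)=\Gamma(2-\alpha)$ factor, and then regroup by summation by parts to read off the nodal weights $\omega_k$. Your explicit endpoint and interior formulas for $\omega_j$ coincide (after identifying $b_k/(t_{k+1}-t_k)$ with the paper's $\mu_k/\Gamma(2-\alpha)$) with the paper's $\omega_k=(\mu_k-\mu_{k-1})/\Gamma(2-\alpha)$, and your remark about reading $(t_n-t_n)^{1-\alpha}=0$ is exactly the boundary convention the paper encodes by setting $\mu_k=0$ outside $0\le k<n$.
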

\begin{proof}
We start by splitting the integration into $n$ non-equidistant intervals
    \begin{equation*}
        \begin{aligned}
{}_{0}^{C}\mathcal{D}_{t_n}^{\alpha}f{(t)} &=\frac{1}{\Gamma{(1-\alpha)}}\int_{0}^{t_{n}}\frac{f^{\prime}(x)}{(t_{n}-x)^{\alpha}}dx  \\
&=\frac1{\Gamma(1-\alpha)}\sum_{k=0}^{n-1}\int_{t_k}^{t_{k+1}}\frac1{(t_n-x)^\alpha}f^{\prime}(x)dx \\
&\approx\frac1{\Gamma(1-\alpha)}\sum_{k=0}^{n-1}\int_{t_k}^{t_{k+1}}\frac{1}{(t_n-x)^k}\frac{f(t_{k+1})-f(t_k)}{t_{k+1}-t_k}dx. \\
\end{aligned}
    \end{equation*}
    
In each interval one may approximate the $f'(x)$ using the forward scheme finite difference approximation of the derivative:
    \begin{equation*}
\begin{aligned}
{}_{0}^{C}\mathcal{D}_{t_n}^{\alpha}f{(t)} & \approx\frac1{\Gamma(1-\alpha)}\sum_{k=0}^{n-1}\frac{f(t_{k+1})-f(t_k)}{t_{k+1}-t_k}\int_{t_k}^{t_{k+1}}\frac{dx}{(t_n-x)^\alpha} .\\
\end{aligned}
    \end{equation*}

 Then, the analytical integration can be utilized to simplify the formulation:
    \begin{equation*}
        \begin{aligned}
{}_{0}^{C}\mathcal{D}_{t_n}^{\alpha}f{(t)} & \approx \frac1{\Gamma(1-\alpha)}\sum_{k=0}^{n-1}\frac{f(t_{k+1})-f(t_{k+1})}{t_{k+1}-t_{k}}\cdot\left[-\frac{(t_{n}-t_{k})^{1-\alpha}-(t_{n}-t_{k+1})^{1-\alpha}}{(\alpha-1)}\right] \\
&\approx\frac1{\Gamma(2-\alpha)}\sum_{k=0}^{n-1}\left[\frac{(t_n-t_k)^{1-\alpha}-t_{n-k+1})^{1-\alpha}}{t_{k+1}-t_k}\right]\left[f(t_{k})-f(t_{k+1})\right].
        \end{aligned}
    \end{equation*}
 
 Defining $\mu_k$ as the weight part of the summation, and a simple reformulation of it yields:
\begin{equation*}
    \begin{aligned}
        {}_{0}^{C}\mathcal{D}_{t_n}^{\alpha}f{(t)} &\approx\frac{1}{\Gamma{(2-\alpha)}}\sum_{k=0}^{n-1}\mu_{k}\left[f{(t_k)}-f{(t_{k+1})}\right] \\
&\approx\frac{1}{\Gamma(2-\alpha)}\sum_{k=0}^{n-1} \left[ \mu_{k}-\mu_{k-1}\right] f(t_{k}) \\
&\approx\sum_{k=0}^n\omega_k f(t_k), 
    \end{aligned}
\end{equation*}
where $\omega_k = \nicefrac{\left[\mu_k - \mu_{k-1}\right]}{\Gamma(2-\alpha)}$ and:
\begin{equation*}
    \mu_k = \begin{cases}\displaystyle\frac{(t_n-t_k)^{1-\alpha}-t_{n-k+1})^{1-\alpha}}{t_{k+1}-t_k} & 0\le k < n\\ 0 & otherwise.\end{cases}
\end{equation*}
\end{proof}
\begin{corollary}
    for the equidistant data points $t_k = kh$ for step size $h$, the formulation of $\mu_k$ can be simplified as:
    \begin{equation*}
        \mu_k = \begin{cases}\displaystyle\frac{((n-k)h)^{1-\alpha}-{(n-k-1)h})^{1-\alpha}}{h} & 0\le k < n\\ 0 & otherwise.\end{cases}
    \end{equation*}
\end{corollary}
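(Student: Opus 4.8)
The plan is to prove the corollary by direct substitution of the uniform node values $t_k = kh$ into the closed-form expression for $\mu_k$ established in the proof of Theorem~\ref{thm:L1}. Recall that there we obtained
\[
\mu_k = \frac{(t_n - t_k)^{1-\alpha} - (t_n - t_{k+1})^{1-\alpha}}{t_{k+1} - t_k}, \qquad 0 \le k < n,
\]
with $\mu_k = 0$ otherwise. Since this formula is already fully explicit, the argument amounts to algebraic bookkeeping on the specialized grid.

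First I would evaluate the three quantities appearing in $\mu_k$ under the equidistant discretization: $t_n - t_k = (n-k)h$, $t_n - t_{k+1} = (n-k-1)h$, and $t_{k+1} - t_k = h$. Substituting these into the fraction immediately produces the claimed expression $\mu_k = \bigl(((n-k)h)^{1-\alpha} - ((n-k-1)h)^{1-\alpha}\bigr)/h$ on $0 \le k < n$. As an optional simplification I would factor $h^{1-\alpha}$ out of the numerator to write $\mu_k = h^{-\alpha}\bigl[(n-k)^{1-\alpha} - (n-k-1)^{1-\alpha}\bigr]$, which isolates the step-size dependence and recovers the classical $L1$ quadrature weights; this remark aids interpretation but is not needed for the statement itself.

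The boundary case $k = n$, together with any index outside $\{0,\dots,n-1\}$, is inherited verbatim from Theorem~\ref{thm:L1}, so the piecewise structure transfers without change. There is essentially no obstacle in this corollary: the only point demanding a little care is the index shift $t_n - t_{k+1} = (n-k-1)h$, which must be handled so that the term $(n-k-1)$ — rather than $(n-k+1)$ — appears, consistent with the corrected form of the summand used in the derivation of the theorem.
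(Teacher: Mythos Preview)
Your proposal is correct and is exactly the ``straightforward'' substitution the paper has in mind: plug $t_k = kh$ into the general $\mu_k$ from Theorem~\ref{thm:L1} and simplify. The extra remark factoring out $h^{1-\alpha}$ to recover the classical $L1$ weights is a nice bonus but, as you note, not required.
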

\begin{proof}
    The proof is straightforward.
\end{proof}
\begin{corollary}
    For computing the fractional derivative of Caputo type for $\alpha>1$, one can calculate the derivative of integer order, followed by the computation of the derivative of order $1-\alpha$ where $\alpha$ represents the fractional part of the specified order. (see property \ref{property})
\end{corollary}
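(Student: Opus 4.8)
The plan is to reduce the case of order greater than one to the sub-unit-order case already treated in Theorem~\ref{thm:L1} by peeling off the integer part of the order (here I write the prescribed order as $q$, to avoid clashing with the earlier use of $\alpha\in(0,1)$). Split $q = n + \bar\alpha$, where $n = \lfloor q \rfloor \ge 1$ is the integer part and $\bar\alpha \in (0,1)$ is the fractional part. By the composition rule recorded in Property~\ref{property}, the Caputo operator ${}_{0}^{C}\mathcal{D}_{t}^{q}$ factors through an ordinary derivative of integer order and a fractional operator whose order is strictly less than one: one first applies $D^{n}$, for which automatic differentiation (or a classical finite-difference stencil) is directly available, and then applies the fractional operator of order $1-\bar\alpha$ to the resulting function on the grid $0 = t_0 < t_1 < \dots < t_n$.

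First I would make the factorization precise on the function class under consideration, i.e.\ check that ${}_{0}^{C}\mathcal{D}_{t}^{q}f(t)$ equals the order-$(1-\bar\alpha)$ derivative of $D^{n}f$ once $f$ is smooth enough and the relevant lower-order traces of $f$ vanish at the lower terminal $t_0 = 0$; this is exactly Property~\ref{property} applied with the stated meaning of $n$ and $\bar\alpha$, and it is the place where one must be careful about which operator (Caputo or Riemann--Liouville) is used for the sub-unit-order factor and about compatibility with the initial conditions at $t_0$. Next I would invoke Theorem~\ref{thm:L1}: since $1-\bar\alpha \in (0,1)$, the inner fractional operator acting on the sampled values $(D^{n}f)(t_k)$ is approximated by a linear combination $\sum_{k=0}^{n}\omega_k\,(D^{n}f)(t_k)$ with the weights $\omega_k$ given there. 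Finally, since $D^{n}f$ is itself obtained from the (network) function by $n$ exact differentiations, hence by a fixed linear map on the nodal values, composing the two linear maps yields a single operational matrix of derivative for order $q$, so the whole computation again collapses to one matrix--vector product, as claimed.

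The main obstacle I expect is not algebraic but a matter of hypotheses and consistency: stating cleanly the smoothness and vanishing-trace conditions under which Property~\ref{property} applies to the Caputo derivative at $t_0 = 0$, and verifying that the forward-difference quotient used inside the proof of Theorem~\ref{thm:L1} remains a consistent approximation when its argument is $D^{n}f$ rather than $f$ itself, so that the composed scheme does not lose accuracy. Once this is settled, the remainder is the routine composition of the two linear operators already constructed.
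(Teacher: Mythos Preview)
Your approach is essentially the same as the paper's: the paper gives no separate proof for this corollary at all, merely pointing to Property~\ref{property} as justification, and you have supplied exactly the details that reference is meant to evoke --- the decomposition $q=n+\bar\alpha$, the appeal to Theorem~\ref{thm:L1} for the sub-unit-order factor, and the composition of the two linear maps. Your added caveats about smoothness, vanishing traces at $t_0$, and the order of composition (Property~\ref{property} writes $D^n\circ D^{1-\alpha}$ while the corollary phrases it as integer-order first) are well taken and go beyond what the paper addresses, but the underlying route is identical.
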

\begin{theorem}
    The operation matrix of the Caputo fractional derivative can be obtained using the lower triangular matrix $\mathcal{A}$:
    \begin{equation*}\label{equ: matrix derivation}
        \mathcal{A} = \begin{bmatrix}
0 \\
\omega_0^{(1)} & \omega_1^{(1)}\\
\omega_0^{(2)} & \omega_1^{(2)} & \omega_2^{(2)}\\
&\vdots\\
\omega_0^{(n-2)} & \omega_1^{(n-2)} & \omega_2^{(n-2)} &\omega_3^{(n-2)} & \cdots &\omega_{n-2}^{(n-2)}\\
\omega_0^{(n-1)} & \omega_1^{(n-1)} & \omega_2^{(n-1)} &\omega_3^{(n-1)} & \cdots &\omega_{n-2}^{(n-1)} & \omega_{n-1}^{(n-1)}
\end{bmatrix}
    \end{equation*}
    
Hence, for the vector-valued function $\mathbf{f}_i = f(x_i)$ and arbitrary nodes $x_i$ the Caputo fractional derivative of order $\alpha$ can be efficiently computed using the operational matrix of derivative:
    \begin{equation*}
        _{}^{C}\mathbf{f}^\alpha \approx \mathcal{A} \mathbf{f}.
    \end{equation*}
\end{theorem}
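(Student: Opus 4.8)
The plan is to obtain $\mathcal{A}$ by assembling, one row at a time, the scalar approximations supplied by Theorem \ref{thm:L1}. Order the prescribed nodes as $0 = x_0 < x_1 < \dots < x_{n-1}$. For each fixed index $i$ with $1 \le i \le n-1$, apply Theorem \ref{thm:L1} to the truncated grid $x_0 < x_1 < \dots < x_i$, whose right endpoint plays the role of $t_n$ there; this produces real-valued weights that I relabel $\omega_0^{(i)}, \omega_1^{(i)}, \dots, \omega_i^{(i)}$ and for which
\[
{}_{0}^{C}\mathcal{D}_{x_i}^{\alpha} f(t) \;\approx\; \sum_{k=0}^{i} \omega_k^{(i)} f(x_k).
\]
Reading off row $i$ of the matrix in the statement, its dot product with $\mathbf{f}$ is exactly $\sum_{k=0}^{i}\omega_k^{(i)} f(x_k)$, so the $i$-th component of $\mathcal{A}\mathbf{f}$ reproduces the Caputo derivative of order $\alpha$ at $x_i$ to the same accuracy guaranteed by Theorem \ref{thm:L1}.

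Two structural remarks then finish the identification of $\mathcal{A}$. First, $\mathcal{A}$ is lower triangular: inside Theorem \ref{thm:L1} (with $n$ replaced by $i$) the summation index never exceeds $i$, equivalently $\mu_k = 0$ for $k \ge i$, hence $\omega_k^{(i)} = \nicefrac{[\mu_k - \mu_{k-1}]}{\Gamma(2-\alpha)} = 0$ for every $k > i$, while the diagonal entry $\omega_i^{(i)} = \nicefrac{-\mu_{i-1}}{\Gamma(2-\alpha)}$ is in general nonzero, matching the displayed pattern. Second, the top row is the zero row because ${}_{0}^{C}\mathcal{D}_{x_0}^{\alpha} f(t)$ is an integral over the degenerate interval $[x_0, x_0]$ and so vanishes. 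For concreteness the entries of row $i$ can also be written out directly from the closed form in the proof of Theorem \ref{thm:L1}: set $\mu_k^{(i)} = \bigl[(x_i - x_k)^{1-\alpha} - (x_i - x_{k+1})^{1-\alpha}\bigr]/(x_{k+1} - x_k)$ for $0 \le k < i$ and $\mu_k^{(i)} = 0$ otherwise, and put $\omega_k^{(i)} = \nicefrac{[\mu_k^{(i)} - \mu_{k-1}^{(i)}]}{\Gamma(2-\alpha)}$.

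Stacking the $n$ scalar identities for $i = 0, 1, \dots, n-1$ into a single vector equation yields ${}^{C}\mathbf{f}^{\alpha} \approx \mathcal{A}\mathbf{f}$, which is the claim. There is no genuinely hard step here: the analytic content is entirely carried by Theorem \ref{thm:L1}, and what remains is an assembly argument. The only points demanding care are bookkeeping ones — aligning the row range of $\mathcal{A}$ (rows $0$ through $n-1$) with the endpoint index $n$ used inside Theorem \ref{thm:L1}, and making explicit that the weights are grid-dependent, so that row $i$ must use the weights generated by the subgrid ending at $x_i$ rather than a single fixed weight vector reused across rows.
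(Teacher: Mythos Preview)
Your proposal is correct and follows essentially the same approach as the paper: both apply Theorem~\ref{thm:L1} row by row to the subgrid $x_0,\dots,x_i$, obtain the grid-dependent weights $\omega_k^{(i)} = \nicefrac{[\mu_k^{(i)}-\mu_{k-1}^{(i)}]}{\Gamma(2-\alpha)}$, and then stack the scalar approximations into the matrix-vector identity ${}^{C}\mathbf{f}^\alpha \approx \mathcal{A}\mathbf{f}$. Your write-up is in fact more thorough than the paper's, since you also justify the lower-triangular shape and the vanishing top row explicitly.
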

\begin{proof}
    The proof can be completed if we approximate the $i$-th element of $_{}^{C}\mathbf{f}^\alpha$ using theorem \ref{thm:L1}:
    \begin{equation*}
        _{}^{C}\mathbf{f}^\alpha_i = \sum_{k=0}^i\omega_k^{(i)} \mathbf{f}_i,
    \end{equation*}
where $\omega_k^{(i)} = \nicefrac{\left[\mu_k^{(i)} - \mu_{k-1}^{(i)}\right]}{\Gamma(2-\alpha)}$ and:
\begin{equation*}
    \mu_k^{(i)} = \begin{cases}\displaystyle\frac{(t_i-t_k)^{1-\alpha}-t_{i-k+1})^{1-\alpha}}{t_{k+1}-t_k} & 0\le k < n\\ 0 & otherwise.\end{cases}
\end{equation*}
\end{proof}

\section{Methodology} \label{sec3}

In this section, we introduce neural network architecture designed for solving fractional delay differential equations. Importantly, we highlight the inherent adaptability of this approach, indicating its potential extension to address differential-algebraic equations. First, we consider a general  form of fractional delay differential equation presented below:

 	\begin{equation}
 	   \phi^{(\alpha)}(\tau)=\chi(\tau,\phi(\tau),\phi^{\prime}(\tau),\phi^{\prime\prime}(\tau),\dots,\phi^{(\lfloor\alpha\rfloor)}(\tau),\phi(\delta(\tau))),
 	\end{equation}
 	where $\delta$ represents the non-negative delay term, and $\alpha$ denotes the order of the derivative.
Now, in the vector form the residual function can be defined as follows:
\begin{equation}
    \mathcal{R}(\boldsymbol\tau)= \mathcal A\tilde\phi(\boldsymbol\tau)-\chi(\boldsymbol\tau,\tilde\phi(\boldsymbol\tau),\tilde\phi^{\prime}(\boldsymbol\tau),\tilde\phi^{\prime\prime}(\boldsymbol\tau),\dots,\tilde\phi^{(\lfloor\alpha\rfloor)}(\boldsymbol\tau),\tilde\phi(\delta(\boldsymbol\tau))),
\end{equation}
where $ \tilde{\phi}(\boldsymbol\tau)$ are the predicted values by the neural network for the input batch $\boldsymbol{\tau}$ and $\mathcal{A}$ is the operation matrix to calculate the fractional derivative, that is defined in the equation (\ref{equ: matrix derivation}).

In crafting the loss function, we draw inspiration from the Physics-Informed Neural Network, integrating both the initial values of the differential equations and the residual values. It is crucial to take into consideration the initial conditions of the delay differential equation, as these conditions are inherently guided by the physical significance of the problem. Below, we have explicitly outlined these conditions.

    If  $\phi^{(p)}(a)=k_{p}, p=0,1,...,\lceil \alpha \rceil$ are the initial values of the differential equation, we define the boundary conditions as follows:
    \begin{equation}
        \mathcal{B}_p = \tilde{\phi}^{(p)}(a) - k_{p}.
    \end{equation}
    
    As a result, the loss function is defined as:

    \begin{equation}
        loss(\boldsymbol{\tau}) = \lambda \| \mathcal{R}({\boldsymbol\tau})\|_2 + \sum_{p=0}^{\lceil \alpha \rceil} \mathcal{B}_p^{2},
    \end{equation}
    where $\|.\|$ is the $L_2$ norm. To enhance the effectiveness of the residual function in the loss function, we multiplied its value by the weight $\lambda$.

Extending the proposed neural network method initially devised for solving fractional delay differential equations, we now apply this approach to the general case fractional-order system of differential-algebraic equations, broadening the scope of its applicability.  The general form of a DAE can be expressed as:
\begin{align*}
\chi\left(\tau, \phi(\tau), \phi^{\prime}(\tau), \phi^{\prime\prime}(\tau),\dots,\phi^{(\alpha)}(\tau)\right) &= 0, \\
\zeta\left(\tau, \phi(\tau)\right) &= 0,
\end{align*}
where 
\begin{itemize}
    \item $\tau$ is the independent variable,
    \item $\phi \in \mathbb{R}^{M}$ is the vector of state variables,
    \item $\chi$ is a vector-valued function that describes the dynamic behavior of the system,
    \item $\zeta$ is a vector-valued function representing algebraic constraints.  
\end{itemize}

  Now, the residuals for each $j$ are formulated as follows:

\begin{equation}
\mathcal{R}_j(\boldsymbol\tau) = \chi_j\left(\boldsymbol\tau, \tilde\phi(\boldsymbol\tau), \tilde\phi^{\prime}(\boldsymbol\tau), \tilde\phi^{\prime\prime}(\boldsymbol\tau),\dots,\mathcal{A}\tilde\phi(\boldsymbol\tau)\right) - \zeta_j\left(\boldsymbol\tau, \tilde\phi(\boldsymbol\tau)\right),
\end{equation}
where $m$ is the number of equations in a DAE and $\boldsymbol{\tau}$ is the input batch. Each equation is addressed with a dedicated neural network architecture, akin to the previously introduced models for solving DDEs, ensuring uniformity across the networks. This design integrates activation functions consistent with those employed in our prior frameworks. In addressing the system of DAEs, the loss function is formulated, drawing inspiration from the previously discussed neural network architectures for DDEs.

    If $\phi_j(a)=k_{j}$ are the initial values of a DAE, the boundary conditions for each $j$ are defined as follows:
    \begin{equation}
        \mathcal{B}_j= \tilde{\phi_j}(a) - k_{j}.
    \end{equation}
    
     The loss function is defined as:

    \begin{equation}
        loss(\boldsymbol\tau) = \lambda \frac{1}{m}\sum_{i=1}^{m} \| \mathcal{R}_i(\tau)\|_2 +  \left(\sum_{j=1}^{m} (\mathcal{B}_{j}^{2})\right),
    \end{equation}
    where $n$ is the number of discrete points chosen for solving the equation and $\tilde{\phi_j}(a)$ are the predicted values by the neural network. To amplify the impact of the residual function within the loss function, its value is weighted by the parameter $\lambda$. 
    
   The architecture of the network in the proposed approach can be any neural network type. For example a pure MLP network \cite{raissi2019physics}, an Recurrent Neural Network (RNN) \cite{hagge2017solving} or a Legendre block \cite{parand2023neural} can be employed.  Here we chose the Legendre block which is an extension of the MLP network with better accuracy. 

Fig. \ref{fig: Legendre block} visually depicts the graphical representation of neural block. Formally, this block is defined by 
$$\mathbf y_i = v_i\left(\tanh(\sum_{j=1}^k\mathbf{w}_j \mathbf x_j+b)\right), \quad i=1,2,\dots,n,$$
where  \(v_{n}(x)\) represents \(n\)-th Legendre polynomial, serving as an orthogonal activation function, defined by:
\begin{center}
 	    $v_0(x)=1$, \quad $v_1(x)=x$, 
 	    
 	    $(n+1)v_{n+1}(x)=(2n+1)xv_n(x)-nv_{n-1}(x)$, \quad $n\geq1$.
 	\end{center}

The choice of $tanh(x)$ is due to the domain restriction of Legendre polynomials. This integration is motivated by the unique mathematical properties of Legendre polynomials, synergizing with the nonlinear characteristics of $tanh(x)$.
Crucially, the computational complexity associated with these blocks is relatively low. Leveraging operational matrices during the back-propagation phase facilitates the computation of the gradient of the cost function concerning network weights. This operational matrix is defined by:
\begin{equation}
			\label{legendre first derivative}	
			\frac{\mathrm{d} }{\mathrm{d} x} \mathcal{V} (x)= A\mathcal{V }(x)\;,	
		\end{equation}
  where
  \begin{equation}
			\label{difLegendr}
			A = ( a_{i,j} ) =\begin{cases}
				2j+1 & j=i-k  \\ 
				0& o.w. \; 
			\end{cases} ,
		\end{equation}
		where 
		\begin{equation}
			k=\begin{cases}
 				1,3,...,m& \; if \;m\; is \; Odd \\ 
 				1,3,...,m-1& \; if \;m\; is \; Even \\ 
			\end{cases} .
 		\end{equation}

Consequently, the overall complexity is minimized, with potential intricacies in the calculation of Legendre polynomials being negligible due to the use of a small $n$ in the blocks. These blocks exhibit flexibility, allowing placement anywhere in the network. Their incorporation enables the embedding of polynomial spaces within the neural network, contributing significantly to the network's expressive capacity.

	\begin{figure}[!ht]
		\centering 
		\includegraphics[width=.6\linewidth]{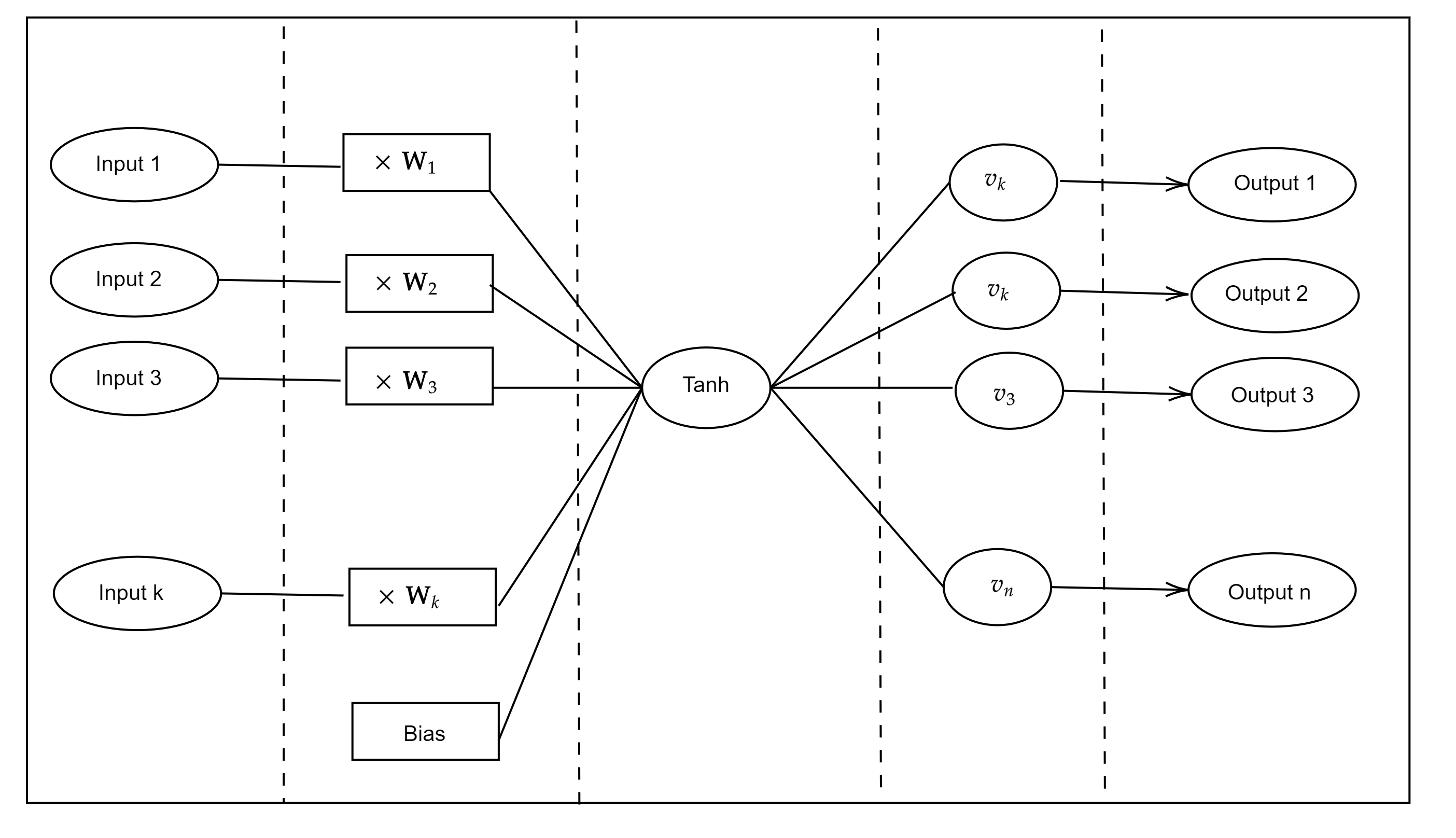}
		\caption{Legendre Neural Block.}
		\label{fig: Legendre block}
	\end{figure}

	\begin{figure}[!ht]
		\centering 
		\includegraphics[width=\linewidth]{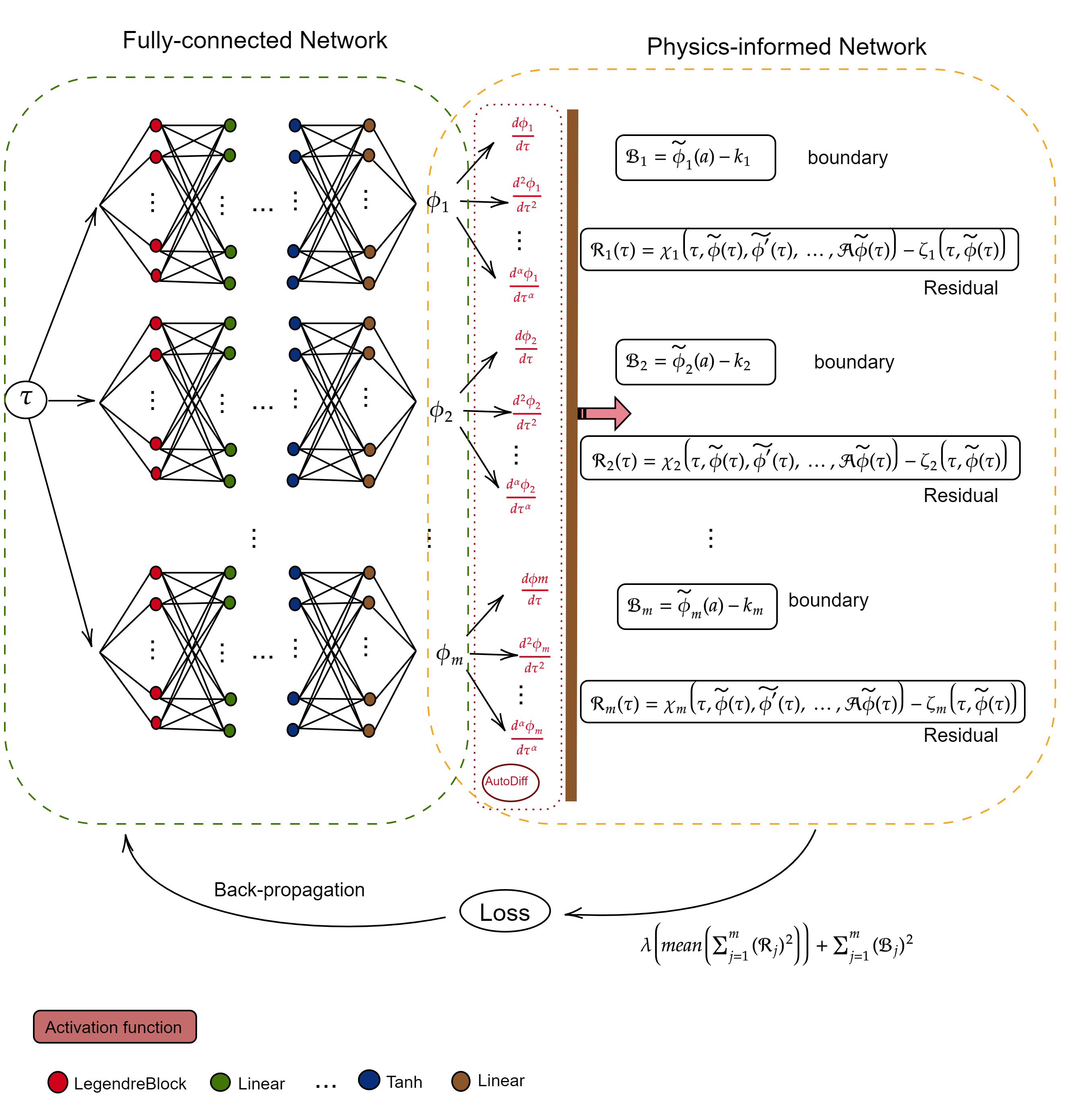}
		\caption{physics-informed neural networks.}
		\label{fig: pinn}
	\end{figure}
 
In Fig. \ref{fig: pinn}, we depict the architecture of the suggested physics-informed neural network. This network consists of residual terms derived from differential equations and initial conditions. The network processes its inputs to generate corresponding outputs which is the dynamics of the problem. The subsequent component is the physics-informed network, where the output $\phi$ is utilized to calculate derivatives based on provided equations, alongside the evaluation of boundary and initial conditions. The ultimate phase involves a feedback mechanism, wherein these residuals are minimized through optimization with respect to a learning rate, resulting in the adjustment of the neural network's parameters. To enhance model performance by parameter adjustments and minimizing the loss function, two common optimization techniques are employed: Adam and L-BFGS. Adam is a well-known first-order optimizer with an adaptive learning rate, while the latter is a second-order algorithm.

 \begin{theorem}
     The Limited-memory Broyden–Fletcher–Goldfarb–Shanno (L-BFGS) optimization algorithm updates the current value $\mathbf x_k$
  using the formula:
  \[
\mathbf{x}_{k+1} = \mathbf x_k - \alpha_k H_k \nabla f(\mathbf x_k),
\]
where:
\begin{itemize}
    \item $\alpha_k$  is the step size,
    \item $H_k$ is the approximation to the inverse Hessian matrix, and
    \item $\nabla f(\mathbf x_k)$ is the gradient of the objective function at $\mathbf x_k$.
\end{itemize}

The step size $\alpha_k$ is typically determined through line search methods.
The L-BFGS algorithm maintains a limited memory of past gradients and parameter changes to approximate the inverse Hessian matrix efficiently.
The update formula for $H_k$  involves vectors $s_k$ and $y_k$ representing the change in parameters and gradient, respectively:
\[
H_{k+1} = (I - \rho_k s_k y_k^T)H_k(I - \rho_k y_k s_k^T) + \rho_k s_k s_k^T,
\]
where $I$ is the identity matrix and $\rho_k=\frac{1}{y^{T}_ks_k}$.
 \end{theorem}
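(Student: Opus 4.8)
The plan is to \emph{derive} the two displayed update rules from the quasi-Newton principle rather than to verify an isolated identity, since the substance of the statement is that the L-BFGS iteration is the memory-limited realization of BFGS. First I would set up the local quadratic model: near $\mathbf{x}_k$ we write $f(\mathbf{x}) \approx f(\mathbf{x}_k) + \nabla f(\mathbf{x}_k)^{T}(\mathbf{x}-\mathbf{x}_k) + \tfrac12 (\mathbf{x}-\mathbf{x}_k)^{T} B_k (\mathbf{x}-\mathbf{x}_k)$ with $B_k$ symmetric positive definite approximating $\nabla^2 f(\mathbf{x}_k)$; minimizing this model gives the step $-B_k^{-1}\nabla f(\mathbf{x}_k)$, and introducing $H_k := B_k^{-1}$ together with a step length $\alpha_k$ produced by a line search yields $\mathbf{x}_{k+1} = \mathbf{x}_k - \alpha_k H_k \nabla f(\mathbf{x}_k)$, the first formula.

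Next I would establish the secant equation. With $s_k := \mathbf{x}_{k+1} - \mathbf{x}_k$ and $y_k := \nabla f(\mathbf{x}_{k+1}) - \nabla f(\mathbf{x}_k)$, a first-order Taylor expansion of $\nabla f$ along the segment joining the iterates gives $\nabla^2 f \, s_k \approx y_k$, so the updated inverse-Hessian estimate must satisfy $H_{k+1} y_k = s_k$. To single out $H_{k+1}$ I would pose the variational problem of choosing, among all symmetric matrices obeying this secant equation, the one closest to $H_k$ in the weighted Frobenius norm $\| W^{1/2}(H - H_k) W^{1/2} \|_F$ for any positive definite $W$ with $W s_k = y_k$. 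Carrying out the Lagrange-multiplier computation (a standard but somewhat technical piece of matrix algebra) produces exactly $H_{k+1} = (I - \rho_k s_k y_k^{T}) H_k (I - \rho_k y_k s_k^{T}) + \rho_k s_k s_k^{T}$ with $\rho_k = 1/(y_k^{T} s_k)$, the second formula. I would then verify directly that this $H_{k+1}$ satisfies $H_{k+1} y_k = s_k$ and that it inherits symmetric positive definiteness from $H_k$ provided the curvature condition $y_k^{T} s_k > 0$ holds, the latter being guaranteed when $\alpha_k$ satisfies the Wolfe conditions.

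Finally, for the limited-memory part I would unroll the recursion: starting from a cheap initializer $H_k^{(0)}$ (typically a scalar multiple of the identity) and applying the rank-two formula only for the $m$ most recent pairs $\{(s_i, y_i)\}_{i=k-m}^{k-1}$ expresses $H_k$ as a short product of elementary factors; expanding the matrix-vector product $H_k \nabla f(\mathbf{x}_k)$ then collapses to the classical two-loop recursion, so $H_k$ is never stored explicitly and each iteration costs $O(mN)$ for a model with $N$ parameters. I expect the main obstacle to be the variational derivation of the rank-two update together with the positive-definiteness argument; the quadratic-model, secant-equation, and unrolling steps are essentially bookkeeping. Since the statement as phrased merely records the algorithm's formulas, the ``proof'' is best read as the justification that these formulas are the unique objects meeting the quasi-Newton requirements.
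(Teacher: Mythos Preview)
Your derivation is the standard textbook justification of the BFGS/L-BFGS update and is correct: the quadratic-model argument gives the quasi-Newton step, the weighted-Frobenius variational problem with the secant constraint $H_{k+1}y_k=s_k$ yields the rank-two inverse update, and unrolling over the last $m$ pairs gives the limited-memory two-loop recursion. You have also correctly flagged the curvature condition $y_k^{T}s_k>0$ (ensured by a Wolfe line search) as the hypothesis needed for positive definiteness to propagate.

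However, you should be aware that the paper offers \emph{no proof whatsoever} for this statement. The ``theorem'' is presented purely as a description of the optimizer used in the numerical experiments, and the paper moves directly from it to Section~\ref{sec4} without a proof environment or even a reference for the derivation. So there is nothing to compare your argument against: you are supplying a genuine derivation where the paper simply records the algorithm's formulas. In that sense your proposal is not ``the same approach'' nor ``a different route''; it is strictly more than what the paper does. If anything, the only mismatch is one of scope: the paper treats the statement as a definition of the method, not as a result requiring proof, so a full variational derivation is more than the context demands.
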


	\section{Numerical Examples} \label{sec4}
In order to demonstrate the efficacy of our proposed method, this section presents a comprehensive numerical solution to a diverse set of well-known delay differential equations and differential-algebraic equations. This exemplary set includes linear and nonlinear, equations involving fractional derivatives, pantograph DDEs, fractional pantograph delay differential equations, and fractional differential-algebraic equations. To gauge the precision of our calculations, we employ a suite of error metrics, including Mean Absolute Error $(MAE)$, $L_{1}$ norm, $L_{2}$ norm, and $L_{\infty}$ norm, each defined as illustrated in Tab. \ref{tab: metrics}. Here, $y_i$ represents the exact solution, and $\tilde{y_i}$ signifies the predicted output generated by our algorithm. In addition to presenting the numerical results and error metrics, we provide graphical representations of the exact and computed solutions. The experiments were conducted using Python programming language, Google Colab for computational resources, PyTorch framework for neural network development, and employing a learning rate of $0.01$ with both Adam and L-BFGS optimizers.

 \begin{table}[t]	
	\centering	
	\caption{Error formula}	
	\begin{tabular}{p{2cm} p{6cm}}
		\toprule
		Metrics & Formula \\
		\midrule
		$L_1$ Norm & $\sum_{i=1}^{n}\left |\hat{y}_{i}- y_{i} \right |$\\
		$L_2$ Norm &$\sqrt{\sum_{i=1}^{n}\left |\hat{y}_{i}- y_{i} \right |^{2}}$\\
		$L_{\infty}$ Norm & $\max\left |\hat{y}_{i}- y_{i} \right |,  \;\;\;\; i=1,...,n$\\
		Relative $\; L_2$ & $\frac{\left \| y_{i}-\hat{y}_{i} \right \|_{2}}{\left \| y_{i} \right \|_2}$\\
		MAE & $\frac{1}{n}\sum_{i=1}^{n}\left |\hat{y}_{i}- y_{i} \right |$\\
		\bottomrule
	\end{tabular} 		
	\label{tab: metrics}
\end{table}

 \begin{example}\label{ex1_4}
    We consider the following nonlinear delay differential equations:
\end{example}

\begin{equation}
    y^{\prime}(x)+e^xy^{\prime}(y-\sin(x^2))+\cos(x)y(x-\sin(x))=-e^{-x}-e^{\sin(x)^2}+\cos(x)e^{\sin(x)-x},\quad x\in[0,1]
\end{equation}
with initial condition
$y(0)=1$ and
the exact solution
$y(x)=e^{-x}$. Tab. \ref{tab_ex_1} presents the computed error values for this equation, and in Fig. \ref{fig1_4}, the difference between the actual equation solution and the predicted value, as well as the residual value, have been plotted. The architecture comprises six sequentially arranged trainable layers. It begins with the first layer as a Legendre Block with 16 nodes, followed by three fully connected layers with $\tanh(x)$ activations: the second layer with 32 nodes, the third layer with 64 nodes, and the fourth layer with 32 nodes. The final layer is another Legendre Block with 5 nodes.
 \begin{table}[ht] 
\centering
 \caption{Approximate Solutions and Error Values for Ex. \ref{ex1_4}}

\begin{tabular}{ll}
\toprule
                Name &   Values \\
\midrule
   
              $L_1$ Norm & $1.88e-02$ \\
              $L_2$ Norm & $1.25e-03$ \\
           $L_{\infty}$ Norm & $1.43e-04$ \\
        Relative $L_2$ & $1.09e-04$ \\
  Mean Absolute Error & $6.27e-05$ \\
\bottomrule
\end{tabular}
\quad
\begin{tabular}{ll}
\toprule
                  x &                  y \\
\midrule
 $0.0000000000000000$ & $0.9999691812777030$ \\
 $0.1000000000000000$ & $0.9048038580658174$ \\
 $0.2000000000000000$ & $0.8187027715677497$ \\
 $0.5000000000000000$ & $0.6064693827773249$ \\
 $1.0000000000000000$ & $0.3677365537432892$ \\
\bottomrule
\end{tabular}
\label{tab_ex_1}
\end{table}
\begin{figure}[ht]
		\centering
		\begin{subfigure}{.40\textwidth}
			\centering			\includegraphics[width=1\linewidth]{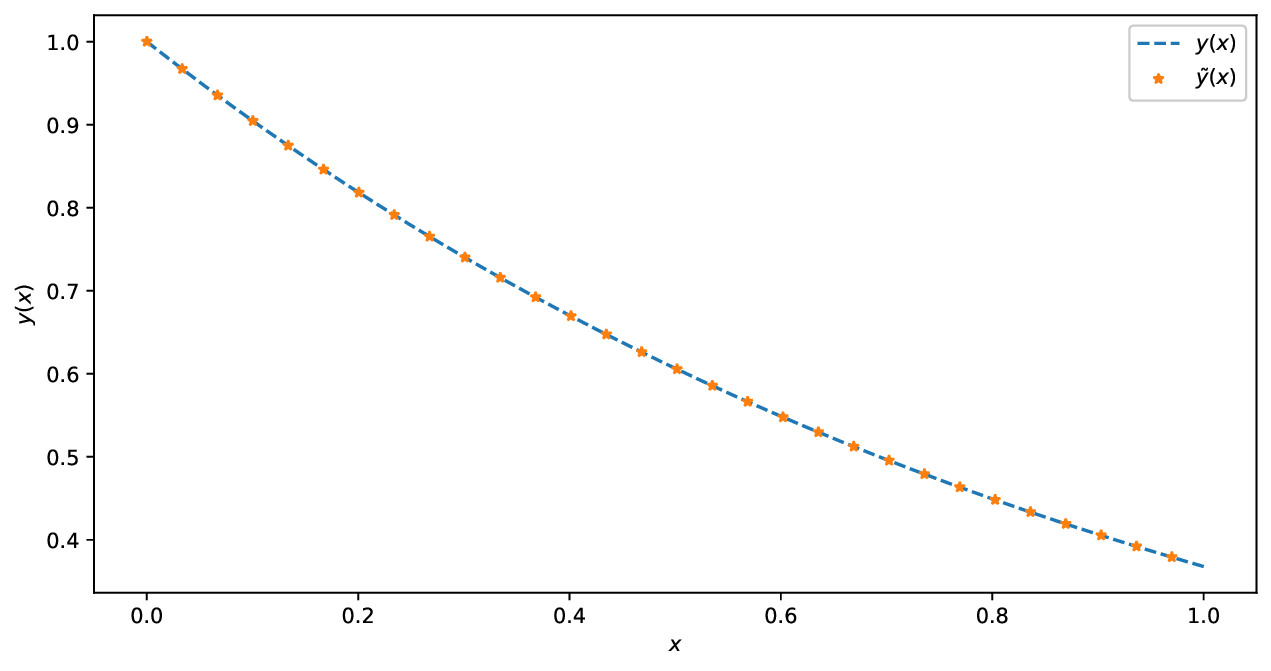}

		\end{subfigure}
		\begin{subfigure}{.40\textwidth}
			\centering
			\includegraphics[width=1\linewidth]{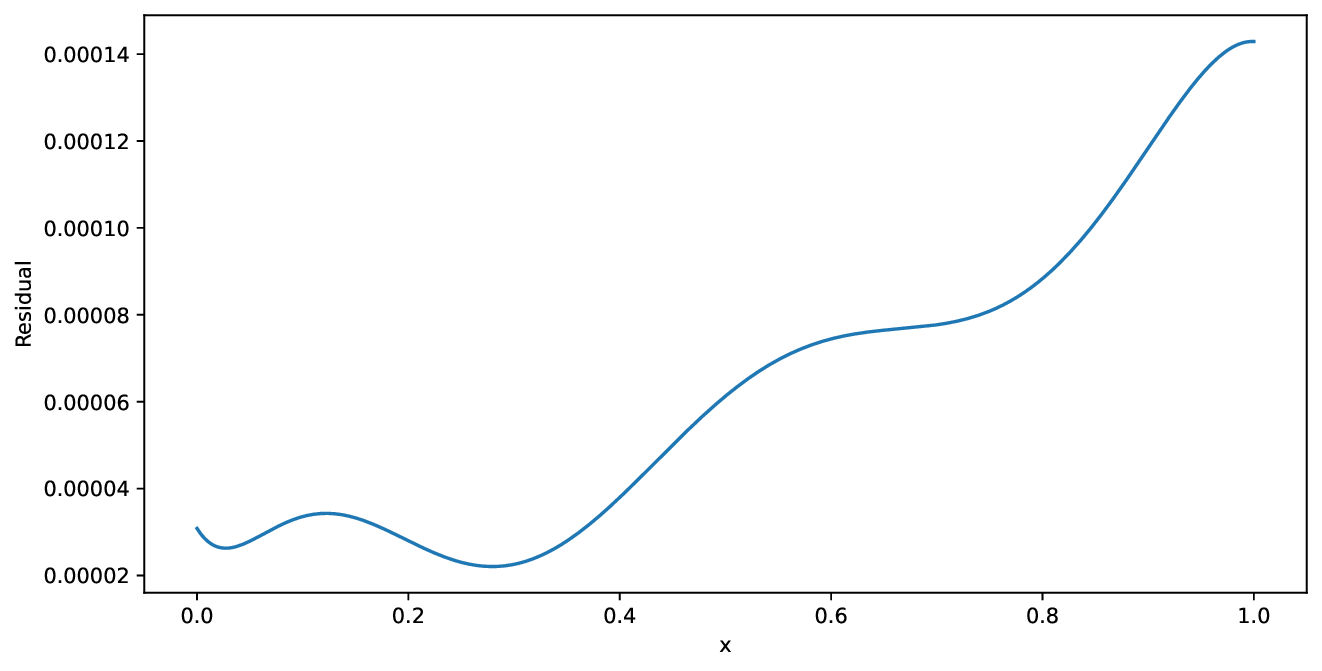}

		\end{subfigure}
   
		\caption{(a) Comparison of Results (b) Absolute Error of Ex.\ref{ex1_4}.}\label{fig1_4}
        \end{figure}
\begin{example} \label{ex2_4}
Consider the following nonlinear delay differential equation:

\begin{equation} 
\begin{aligned}
y'(x) + \sqrt{\cos(x)} y'(\sqrt{x}) + (\sin(\sqrt{x}) + e^x) y(\sin(x)) = e^x + \sqrt{\cos(x)} e^{\sqrt{x}} + (\sin(\sqrt{x}) + e^x) e^{\sin(x)}, \quad x \in [0, 1]
\end{aligned}
\end{equation}
with an initial condition of $y(0) = 1$ and the exact solution being $y(x) = e^x.$ In Tab. \ref{tab_ex_2}, approximate values and error values have been obtained. Furthermore, in Fig. \ref{fig2_4}, it is evident that the residual error values have been depicted with precision to five decimal places. The architecture is identical to that in example \ref{ex1_4}.
\end{example}
\begin{table}[t] 
\centering
 \caption{Approximate Solutions and Error Values for Ex. \ref{ex2_4}}

\begin{tabular}{ll}
\toprule
                Name &   Values \\
\midrule
   
              $L_1$ Norm & $2.96e-02$ \\
              $L_2$ Norm & $1.94e-03$ \\
           $L_{\infty}$ Norm & $1.84e-04$ \\
        Relative $L_2$ & $6.26e-05$ \\
  Mean Absolute Error & $9.87e-05$ \\
\bottomrule
\end{tabular}
\quad
\begin{tabular}{ll}
\toprule
                  x &                  y \\
\midrule
 $0.0000000000000000$ & $1.0001576081160477$ \\
 $0.1000000000000000$ & $1.1053201283380465$ \\
 $0.2000000000000000$ & $ 1.2214592190076856$ \\
 $0.5000000000000000$ & $1.6489053832574243$ \\
 $1.0000000000000000$ & $2.7183227015316742$ \\
\bottomrule
\end{tabular}
\label{tab_ex_2}
\end{table}
\begin{figure}[t]
		\centering
		\begin{subfigure}{.40\textwidth}
			\centering			\includegraphics[width=1\linewidth]{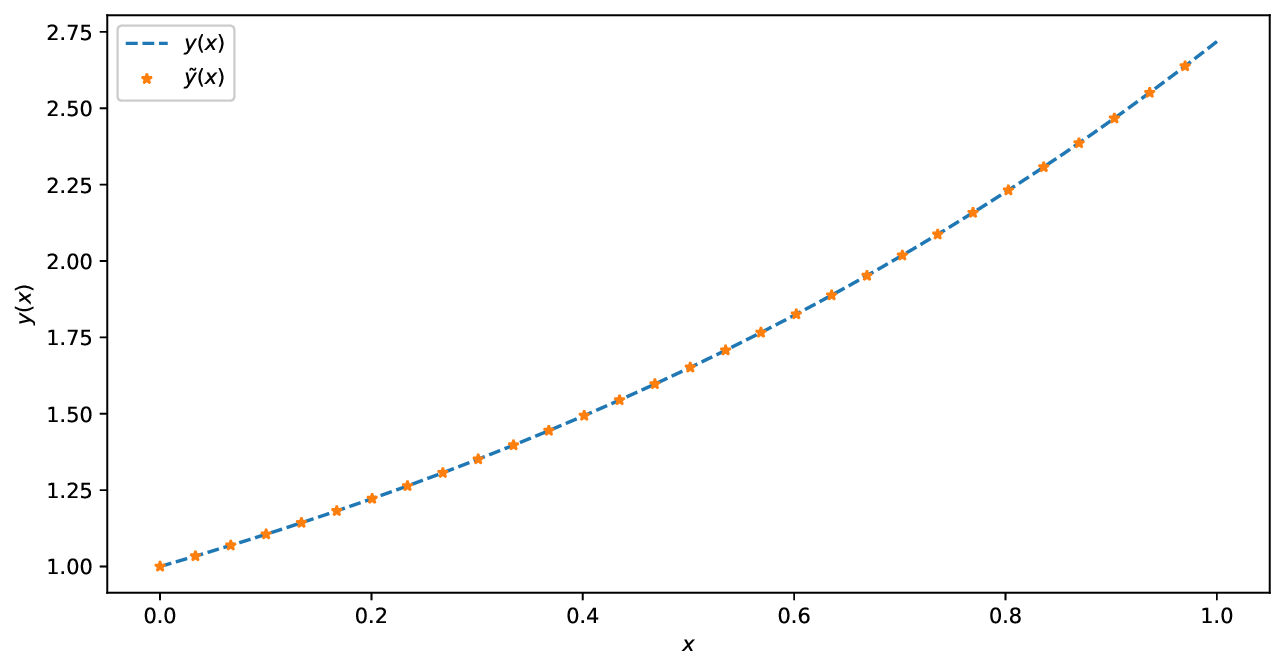}

		\end{subfigure}
		\begin{subfigure}{.40\textwidth}
			\centering
			\includegraphics[width=1\linewidth]{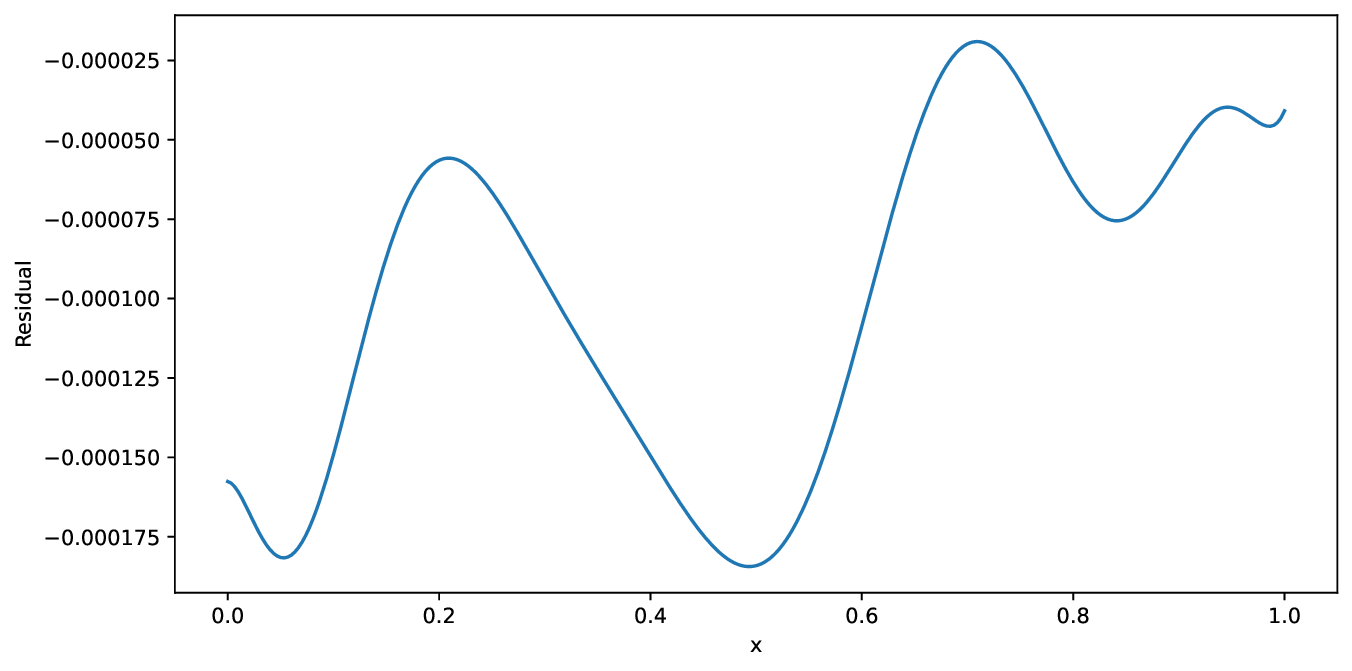}

		\end{subfigure}
   
		\caption{(a) Comparison of Results (b) Absolute Error of Ex.\ref{ex2_4}.}\label{fig2_4}
        \end{figure}

    \begin{example}\label{ex3_4}
We have the following delay differential equation, known as a Pantograph equation:
\begin{equation}
y^{\prime}(x)=\frac{1}{2}y(qx)-y(x)-\frac{1}{2}e^{-qx}, \quad x\in[0,1], y(0)=1.
\end{equation}

The exact solution to this Pantograph equation is given by
$y(x)=e^{-x}$ for $0<q<1$. In Figure \ref{fig3_4}, the exact and estimated solutions are plotted for $q=0.5$. The architecture consists of six trainable layers arranged sequentially, commencing with the Legendre Block as the initial layer. Subsequently, fully connected layers are incorporated, employing $\tanh(x)$ activation functions. The node configuration for each layer is specified as follows: 16 nodes in the Legendre Block layer, 32 nodes in both the second and fourth layers, 64 nodes in the third layer, and 5 nodes in the final layer. (see Tab. \ref{tab_ex_3})
\begin{table}[ht] 
\centering
 \caption{Approximate Solutions and Error Values for Ex. \ref{ex3_4}}

\begin{tabular}{ll}
\toprule
                Name &   Values \\
\midrule
   
              $L_1$ Norm & $7.16e-04$ \\
              $L_2$ Norm & $4.75e-05$ \\
           $L_{\infty}$ Norm & $4.59e-06$ \\
        Relative $L_2$ & $4.17e-06$ \\
  Mean Absolute Error & $2.39e-06$ \\
\bottomrule
\end{tabular}
\quad
\begin{tabular}{ll}
\toprule
                  x &                  y \\
\midrule
 $0.0000000000000000$ & $1.0000022668994222$ \\
 $0.1000000000000000$ & $0.9048372111735443$ \\
 $0.2000000000000000$ & $0.8187349625675631$ \\
 $0.5000000000000000$ & $0.6065352015875470$ \\
 $1.0000000000000000$ & $0.3678792914460537$ \\
\bottomrule
\end{tabular}
\label{tab_ex_3}
\end{table}
\begin{figure}[ht]
		\centering
		\begin{subfigure}{.40\textwidth}
			\centering			\includegraphics[width=1\linewidth]{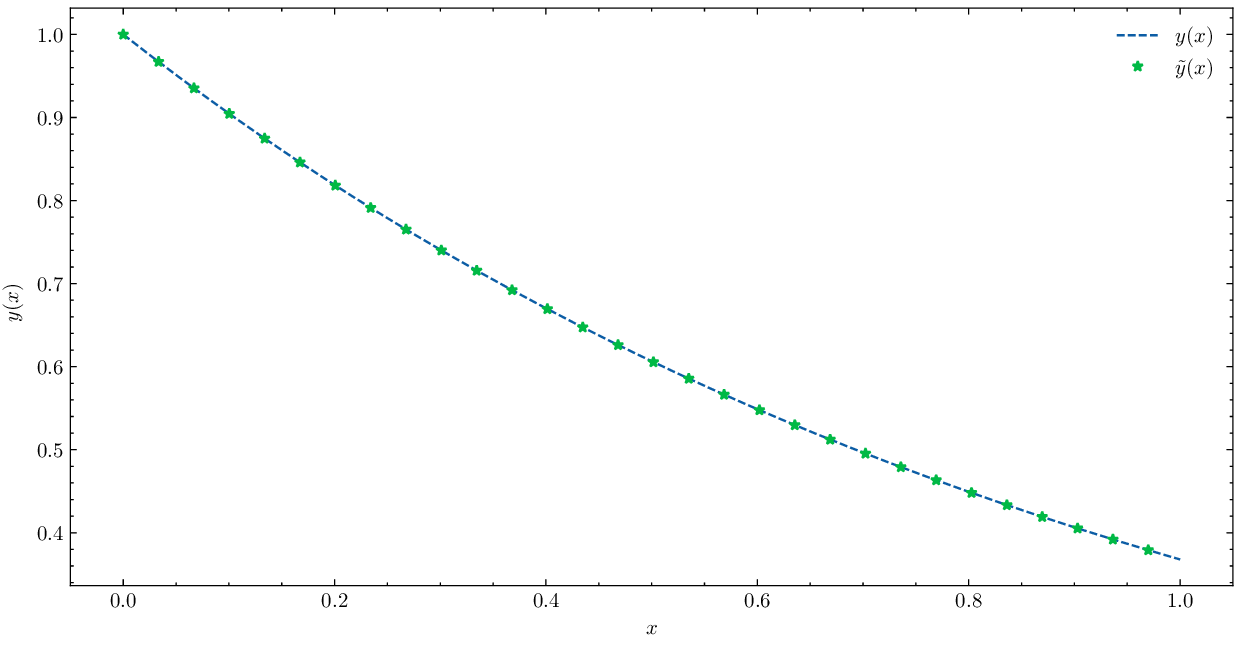}

		\end{subfigure}
		\begin{subfigure}{.40\textwidth}
			\centering
			\includegraphics[width=1\linewidth]{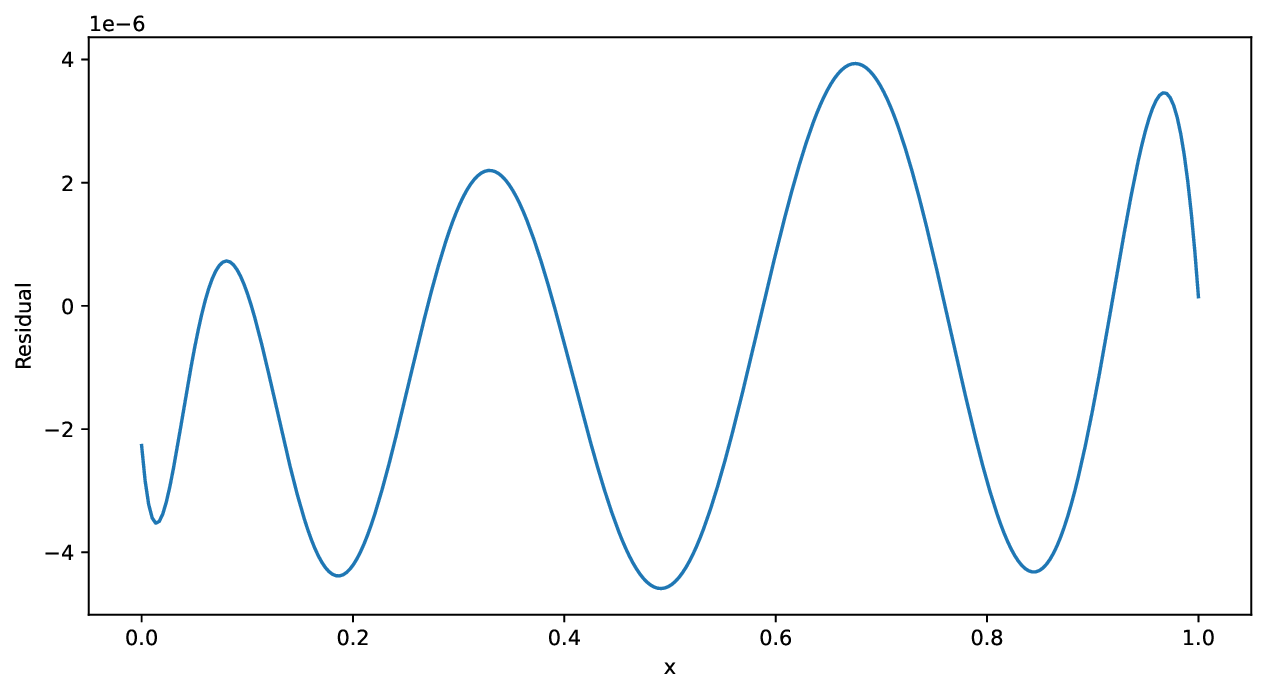}

		\end{subfigure}
   
		\caption{(a) Comparison of Results (b) Absolute Error of Ex.\ref{ex3_4}.}\label{fig3_4}
        \end{figure}

\end{example}
    \begin{example}\label{ex4_4}
    Consider the following delay differential equation, which is a Pantograph equation:
    \begin{equation}
        y^{\prime}(x) = \frac{1}{2}y(x) + \frac{1}{2}e^{\frac{x}{2}}y\left(\frac{x}{2}\right), \quad 0 \leq x \leq R, \quad y(0) = 1.
    \end{equation}
    
    The exact solution to this Pantograph equation is $y(x) = e^x$, where we take $R = 1$. Fig. \ref{fig4_4} and Tab. \ref{tab_ex_4} present a comparative analysis of results, showcasing absolute errors and approximate solutions. The architecture is the same as the one in example \ref{ex3_4}.
\end{example}
\begin{table}[t] 
\centering
 \caption{Approximate Solutions and Error Values for Ex. \ref{ex4_4}}
\begin{tabular}{ll}
\toprule
                Name &   Values \\
\midrule
   
              $L_1$ Norm & $6.51e-03 $ \\
              $L_2$ Norm & $4.12e-04$ \\
           $L_{\infty}$ Norm & $4.07e-05$ \\
        Relative $L_2$ & $ 1.33e-05$ \\
  Mean Absolute Error & $2.17e-05$ \\
\bottomrule
\end{tabular}
\quad
\begin{tabular}{ll}
\toprule
                  x &                  y \\
\midrule
 $0.0000000000000000$ & $ 0.9999892555021601$ \\
 $0.1000000000000000$ & $ 1.1051505603136731$ \\
 $0.2000000000000000$ & $1.2213964499059211$ \\
 $0.5000000000000000$ & $1.6487125644152036$ \\
 $1.0000000000000000$ & $2.7182473537536378$ \\
\bottomrule
\end{tabular}
\label{tab_ex_4}
\end{table}
\begin{figure}[t]
		\centering
		\begin{subfigure}{.40\textwidth}
			\centering			\includegraphics[width=1\linewidth]{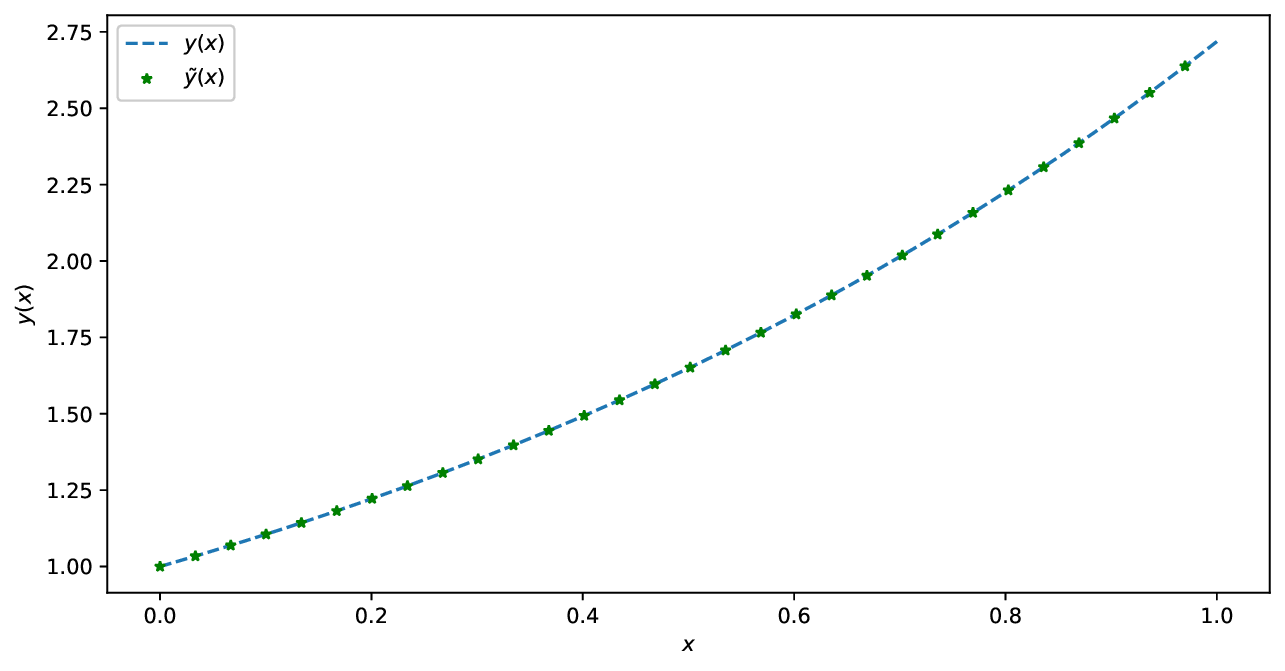}

		\end{subfigure}
		\begin{subfigure}{.40\textwidth}
			\centering
			\includegraphics[width=1\linewidth]{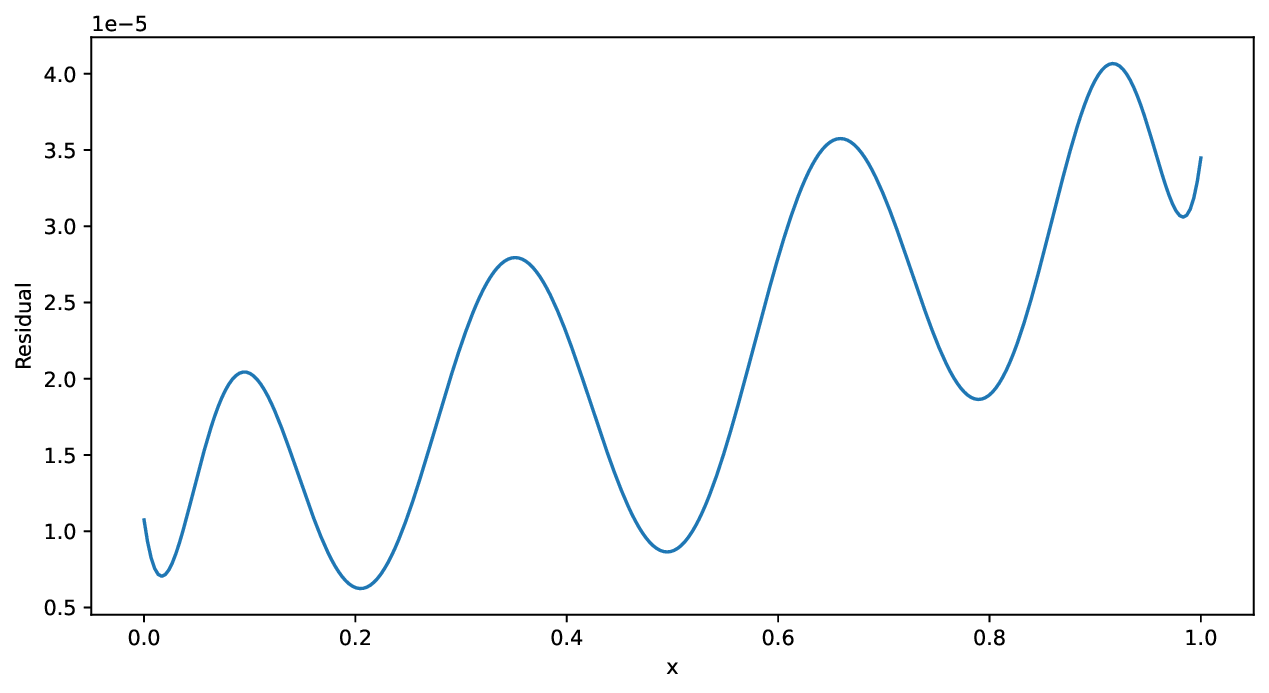}

		\end{subfigure}
   
		\caption{ (a) Comparison of Results (b) Absolute Error of Ex. \ref{ex4_4}.}\label{fig4_4}
        \end{figure}

\begin{example}\label{ex5_4}
    We have the following fractional delay differential equation:
    \begin{equation}
        D^{0.3}y(x) = y(x-1) - y(x) + 1 - 3x + 3x^2 + \frac{2000x^{2.7}}{1071\Gamma(0.7)}, \qquad y(0) = 0,
    \end{equation}
    where the analytical solution is $y(x) = x^3$ for $x \geq 0$. In Fig. \ref{fig5_4}, a comparison of outcomes is presented, illustrating absolute errors. Correspondingly, Tab. \ref{tab_ex_5} provides the details of approximate solutions and absolute errors.
\end{example}
\begin{table}[ht]
\centering
 \caption{Approximate Solutions and Error Values for Ex. \ref{ex5_4}}
\begin{tabular}{ll}
\toprule
                Name &   Values \\
\midrule
   
              $L_1$ Norm & $5.02e-01 $ \\
              $L_2$ Norm & $1.96e-02$ \\
           $L_{\infty}$ Norm & $ 1.09e-03$ \\
        Relative $L_2$ & $ 1.63e-03$ \\
  Mean Absolute Error & $5.02e-04$ \\
\bottomrule
\end{tabular}
\quad
\begin{tabular}{ll}
\toprule
                  x &                  y \\
\midrule
 $0.0000000000000000$ & $-0.0000532769776176$ \\
 $0.1000000000000000$ & $ 
  0.0014034548414760$ \\
 $0.2000000000000000$ & $0.0083881827921961$ \\
 $0.5000000000000000$ & $0.1237044632150650$ \\
 $1.0000000000000000$ & $0.9992348206216326$ \\
\bottomrule
\end{tabular}
\label{tab_ex_5}
\end{table}
\begin{figure}[ht]
		\centering
		\begin{subfigure}{.40\textwidth}
			\centering			\includegraphics[width=1\linewidth]{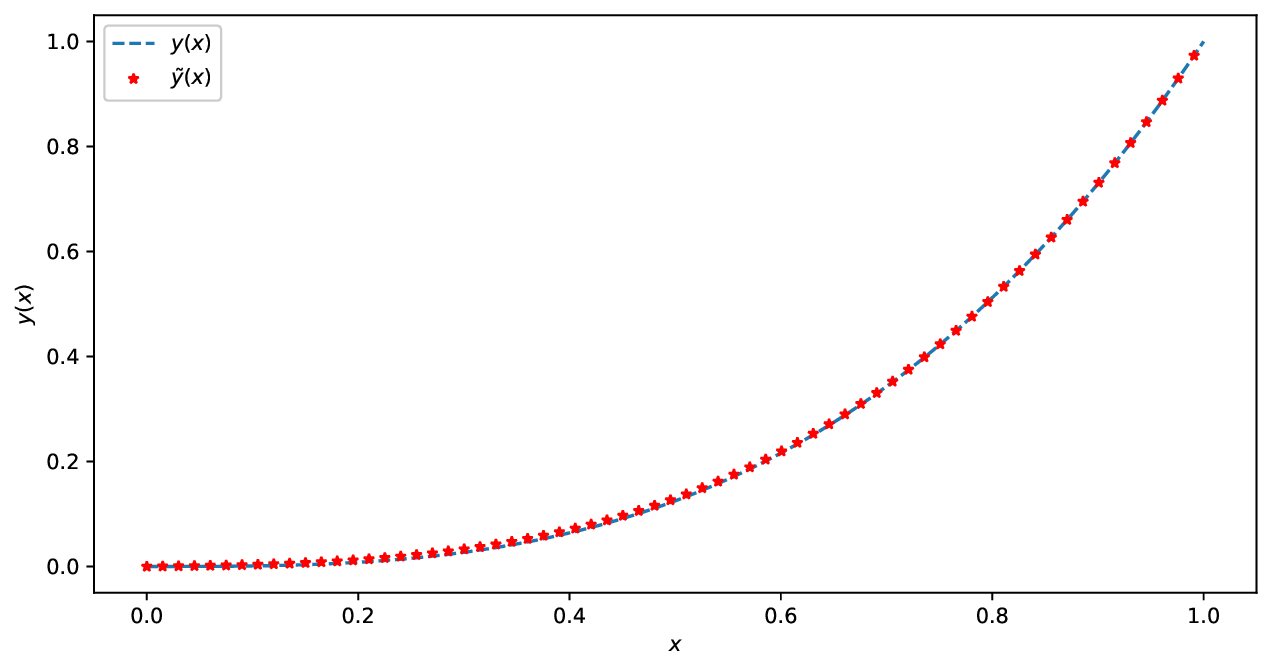}

		\end{subfigure}
		\begin{subfigure}{.40\textwidth}
			\centering
			\includegraphics[width=1\linewidth]{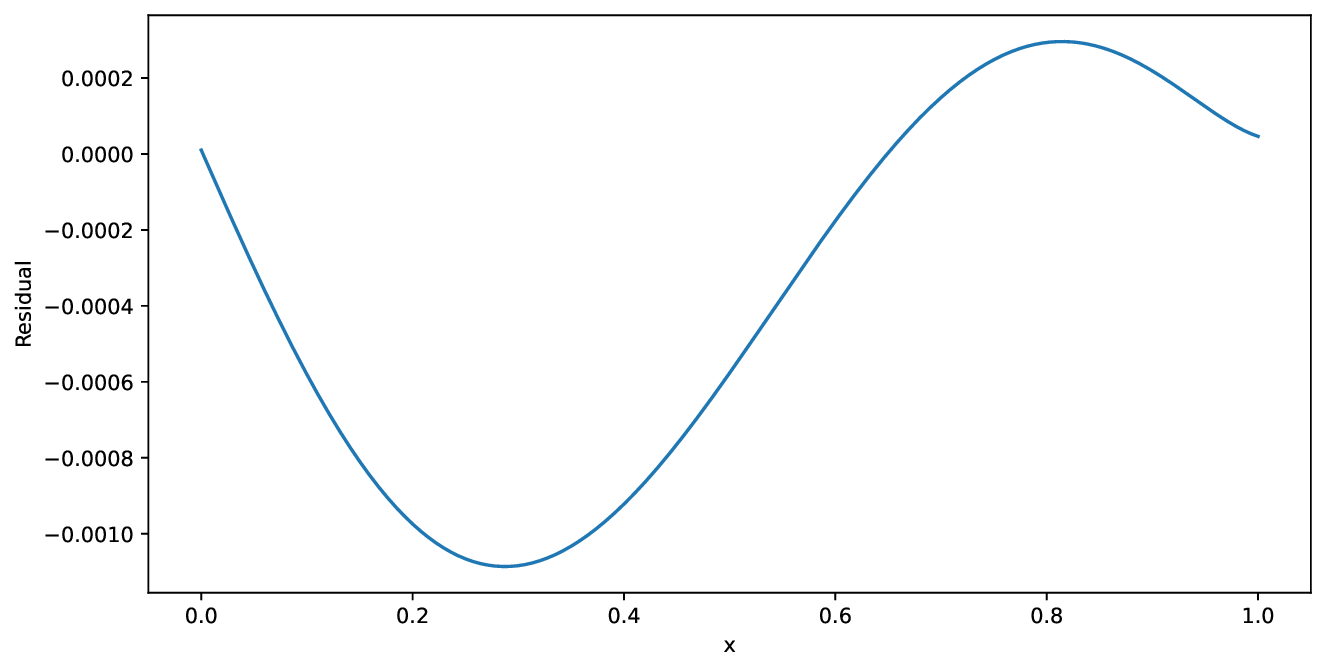}

		\end{subfigure}
   
		\caption{(a) Comparison of Results (b) Absolute Error of Ex. \ref{ex5_4}.}\label{fig5_4}
        \end{figure}

\begin{example}\label{ex6_4}
    Consider the following fractional-order delay differential equation:
    \begin{equation}
        D^{\frac{1}{2}}y(x) = y(\frac{1}{3}x) + y^2(x) + g(x), \quad y(0) = 0,
    \end{equation}
    where $g(x) = \frac{8}{3\sqrt{\pi}}x^{\frac{3}{2}} - \frac{2}{\sqrt{\pi}}x^{\frac{1}{2}} - x^4 + 2x^3 - \frac{10}{9}x^2 + \frac{1}{3}x$, and the equation has an analytical solution given by $y(x) = x^2 - x$. Fig. \ref{fig6_4} displays a comparative analysis of outcomes, highlighting absolute errors. Additionally, Tab. \ref{tab_ex_6} details the corresponding approximate solutions and absolute errors. The layer architecture in examples \ref{ex5_4} and \ref{ex6_4} is identical to that in example \ref{ex3_4}.
\end{example}
\begin{table}[b]
\centering
 \caption{Approximate Solutions and Error Values for Ex. \ref{ex6_4}}
\begin{tabular}{ll}
\toprule
                Name &   Values \\
\midrule
   
              $L_1$ Norm & $3.09e-02 $ \\
              $L_2$ Norm & $1.11e-03$ \\
           $L_{\infty}$ Norm & $ 7.50e-05$ \\
        Relative $L_2$ & $ 1.92e-04$ \\
  Mean Absolute Error & $3.09e-05$ \\
\bottomrule
\end{tabular}
\quad
\begin{tabular}{ll}
\toprule
                  x &                  y \\
\midrule
 $0.0000000000000000$ & $ 0.0000065602927927$ \\
 $0.1000000000000000$ & $ -0.0899926607745801$ \\
 $0.2000000000000000$ & $-0.1599556891727947$ \\
 $0.5000000000000000$ & $-0.2499623639493245$ \\
 $1.0000000000000000$ & $ -0.0000014611903909 $ \\
\bottomrule
\end{tabular}
\label{tab_ex_6}
\end{table}
\begin{figure}[t]
		\centering
		\begin{subfigure}{.40\textwidth}
			\centering			\includegraphics[width=1\linewidth]{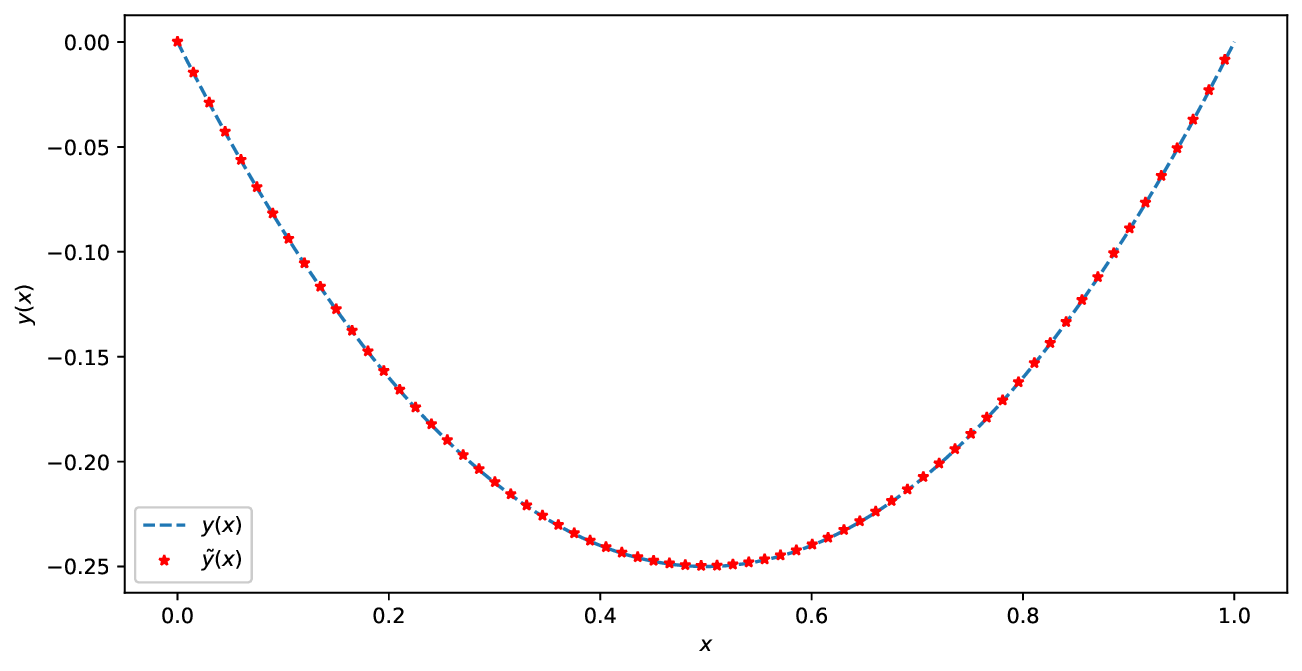}

		\end{subfigure}
		\begin{subfigure}{.40\textwidth}
			\centering
			\includegraphics[width=1\linewidth]{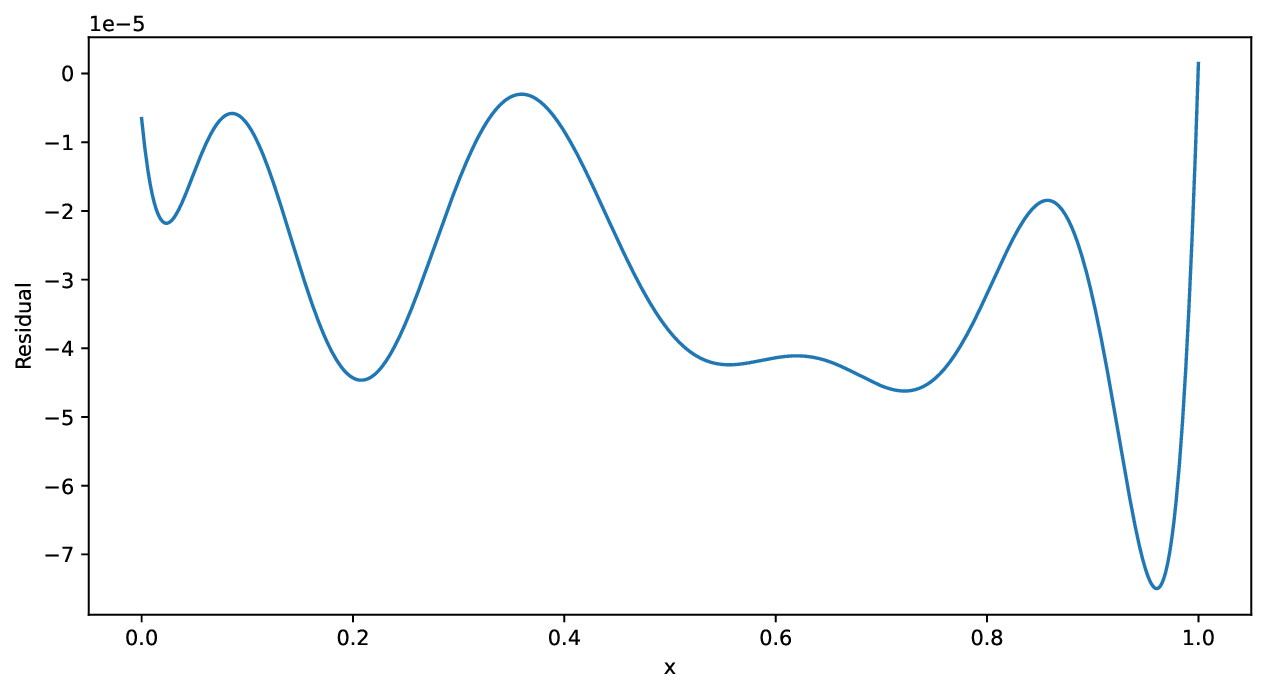}

		\end{subfigure}
   
		\caption{(a) Comparison of Results (b) Absolute Error of Ex. \ref{ex6_4}.}\label{fig6_4}
        \end{figure}

In the following examples, we address a differential-algebraic equation and a fractional-order differential-algebraic equation.        

\begin{example} \label{ex7_4}
consider the following DAE:
\begin{equation}
\begin{cases}

y^{\prime}_1(x)=y_1(x)-y_3(x)y_2(x)+sin(x)+xcos(x),
\\

y^{\prime}_2(x)=xy_3(x)+y^2_1(x)+sec^2(x)-x^2(cos(x)+sin^2(x)),
\\

0=y_1(x)-y_3(x)+x(cos(x)-sin(x)), \quad x\in [0,1],
\\
\end{cases}
\end{equation}
the exact solutions are given with the initial conditions $y_1(0) = y_2(0) = y_3(0) = 0$: $y_1(x) = x\sin(x)$, $y_2(x) = \tan(x)$, and $y_3(x) = x\cos(x)$. In Tab. \ref{tab7}, we present the error for this case. Additionally, in Fig. \ref{fig7_4}, we have plotted the absolute error and the comparison between exact and calculated solutions for $y_1(x)$, $y_2(x)$, and $y_3(x)$. Three multilayer perceptron models with identical layer architectures are utilized. Each model starts with a Legendre Block having 40 nodes, followed by a series of fully connected layers: a layer with 5 nodes and $\tanh(x)$ activation, a layer with 32 nodes and $\tanh(x)$ activation, another layer with 5 nodes and $\tanh(x)$ activation, followed by linear layers with 10 and output nodes.
\end{example}
\begin{table}[t]
\centering
 \caption{Error Values for Ex. \ref{ex7_4}}\label{tab7}
\begin{tabular}{llll}
\toprule
                Name &   $y_1$ Error Values & $y_2$ Error Values & $y_3$ Error Values \\
\midrule
   
              $L_1$ Norm & $7.45e-02 $ & $2.48e-02$ & $ 1.84e-01$ \\
              $L_2$ Norm & $2.65e-03$ & $ 1.27e-03$ & $8.01e-03$ \\
           $L_{\infty}$ Norm & $ 1.09e-04$ & $1.07e-04$ & $ 8.15e-04$ \\
        Relative $L_2$ & $ 2.37e-04$ & $6.03e-05$ & $6.75e-04$ \\
  Mean Absolute Error & $9.31e-05$ & $3.10e-05$ & $2.30e-04$ \\
\bottomrule
\end{tabular}
\end{table}
\begin{figure}[ht]
		\centering
 \includegraphics[width=1\linewidth]{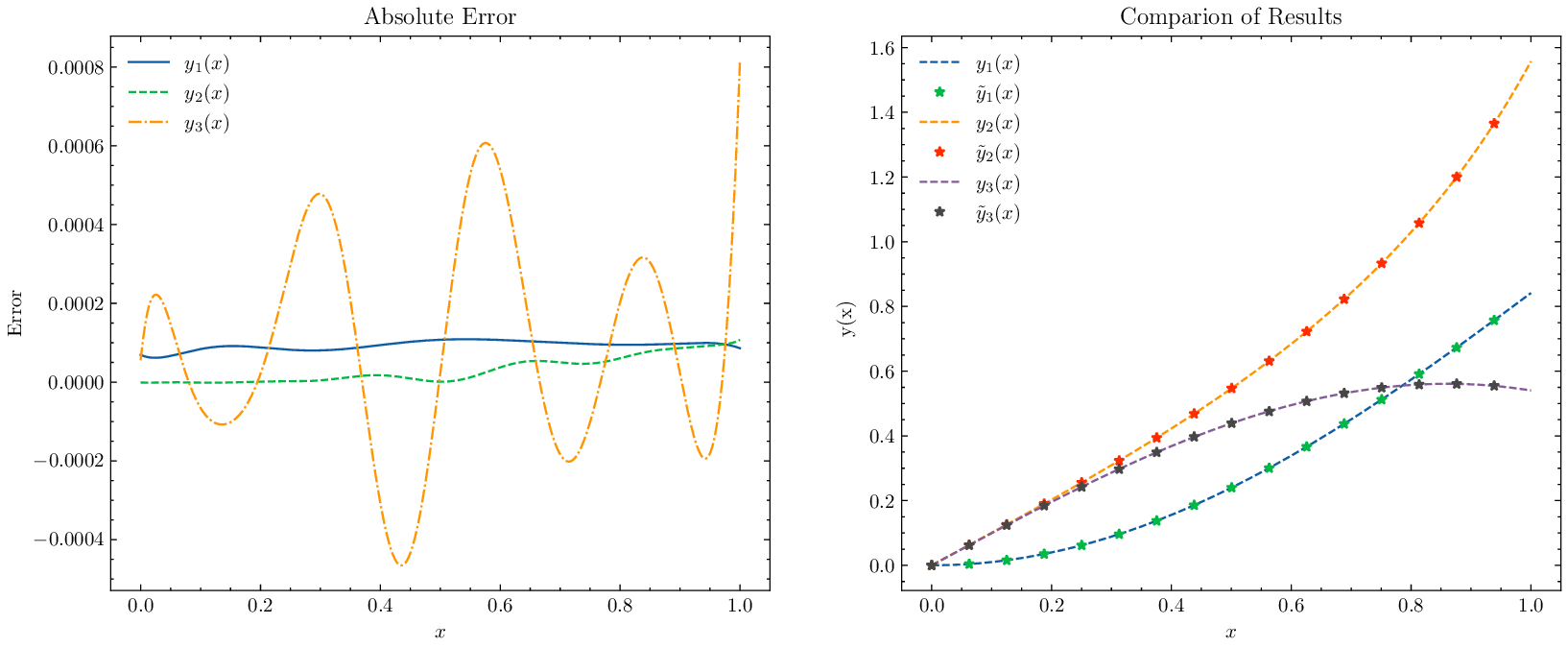}

		\caption{Absolute error and comparison of results for Ex. \ref{ex7_4}.}\label{fig7_4}
        \end{figure}

\begin{example} \label{ex8_4}
    In this example, we compute the solution for a linear fractional differential-algebraic equation

    \begin{equation}
\begin{cases}

D^{\frac{1}{2}}y_1(x)+2y_1(x)-\frac{\Gamma(\frac{7}{2})}{\Gamma(3)}y_2(x)+y_3(x)=2x^{\frac{5}{2}}+sin(x),
\\

D^{\frac{1}{2}}y_2(x)+y_2(x)+y_3(x)=\frac{\Gamma(3)}{\Gamma(\frac{5}{2})}x^{\frac{3}{2}}+x^2+sin(x),
\\

2y_1(x)+y_2(x)-y_3(x)=2x^{\frac{5}{2}}+x^2-sin(x), \quad x\in[0,1]
\\
\end{cases}
\end{equation}
with zero initial conditions, the exact solutions are as follows: $y_1(x) = x^{5/2}$, $y_2(x) = x^2$, and $y_3(x) = sin(x)$. Within Tab. \ref{tab8}, we showcase the error corresponding to this scenario. Furthermore, Fig. \ref{fig8_4} illustrates both the absolute error and the contrast between the exact and computed solutions for $y_1(x)$, $y_2(x)$, and $y_3(x)$. Three PyTorch double-precision multilayer perceptron models exhibit a consistent layer configuration: Legendre Block (40 nodes), fully connected layers (5 nodes with $\tanh(x)$, 32 nodes with $\tanh(x)$, 5 nodes with $\tanh(x)$), Chebyshev Block (10 nodes), and an output layer.
\end{example}

\begin{table}[ht] 
\centering
 \caption{Error Values for Ex. \ref{ex8_4}}\label{tab8}
\begin{tabular}{llll}
\toprule
                Name &   $y_1$ Error Values & $y_2$ Error Values & $y_3$ Error Values \\
\midrule
   
              $L_1$ Norm & $2.16e-02$ & $1.57e-01$ & $ 1.33e-01$ \\
              $L_2$ Norm & $8.85e-04$ & $5.32e-03$ & $4.92e-03$ \\
           $L_{\infty}$ Norm & $ 1.47e-04$ & $4.00e-04$ & $3.17e-04$ \\
        Relative $L_2$ & $ 6.85e-05$ & $3.76e-04$ & $2.98e-04$ \\
  Mean Absolute Error & $2.16e-05$ & $ 1.57e-04$ & $1.33e-04$ \\
\bottomrule
\end{tabular}
\end{table}
\begin{figure}[ht] 
		\centering
 \includegraphics[width=1\linewidth]{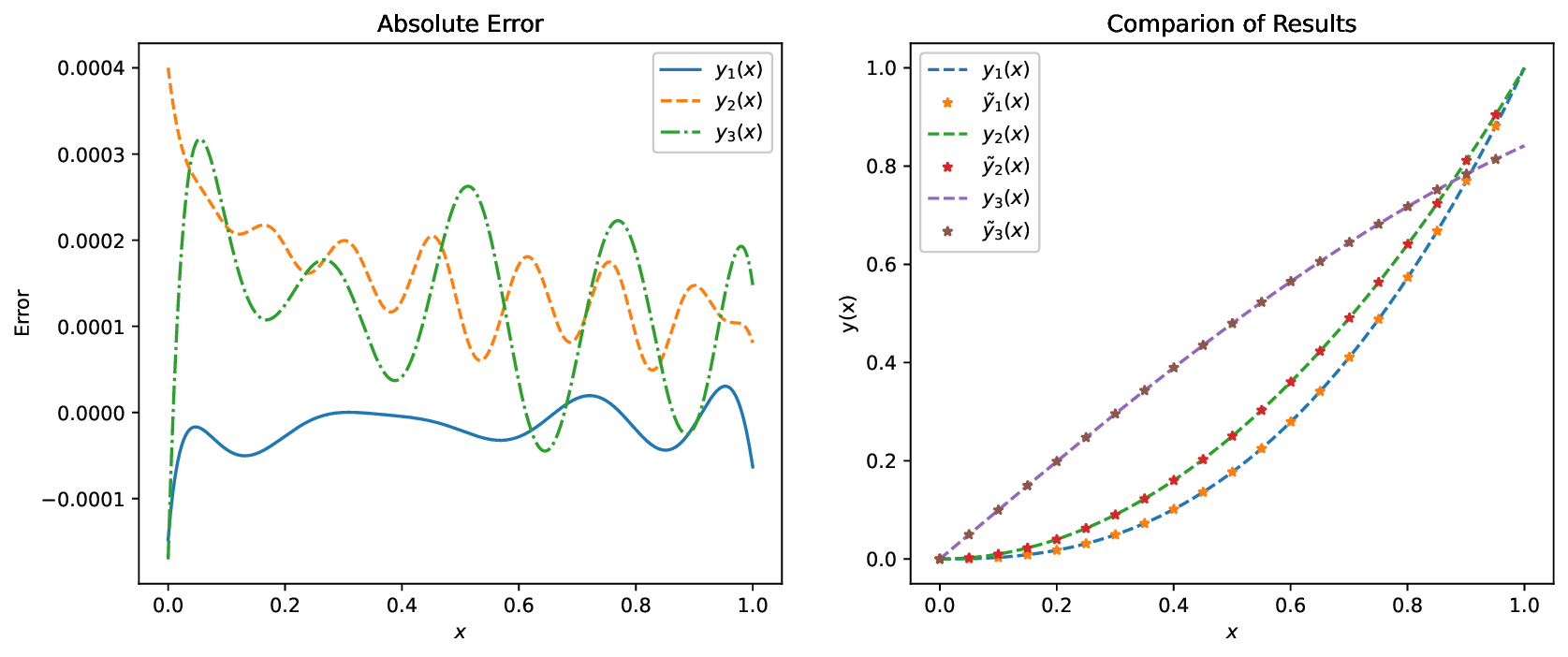}

		\caption{Absolute error and comparison of results for Ex. \ref{ex8_4}.}\label{fig8_4}
        \end{figure}
        
        \begin{figure}[!ht]
		\centering 
		\includegraphics[width=.6\linewidth]{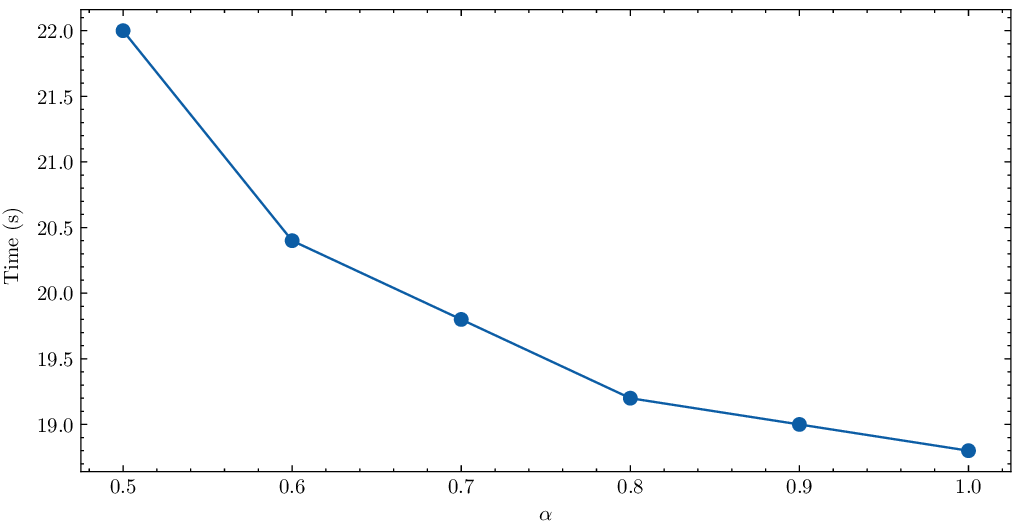}
		\caption{Time complexity of training fPINN using the operational matrix of Caputo derivative using 1000 Adam epochs.}
		\label{fig: time}
	\end{figure} 

 \section{Conclusion}\label{sec5} 

While automatic differentiation is employed for derivative operations of integer order, we proposed a non-uniform discretization of the fractional Caputo operator to handle differential equations of fractional order.
  
In our proposed framework, distinguished by non-uniform discretization, we observe that it does not impact the computational speed in the computation of fractional orders. When compared to automatic differentiation for integral orders, there is no significant decrease in speed as shown in Fig. \ref{fig: time}. This is because the operational matrix $\mathcal{A}$ can be computed before the training process. In addition, computing fractional derivatives in the training process can be done by matrix-vector multiplication.

Successfully applied to Delay Differential Equations, pantograph Delay Differential Equations, and Differential-algebraic Equations characterized by fractional orders, the method demonstrates efficacy in accurately solving these equations.

Implemented on the Legendre Neural Block architecture, which integrates Legendre polynomials into the artificial neural network structure, our approach capitalizes on properties such as nonlinearity, computational efficiency, and the ability to address vanishing/exploding gradient issues, ensuring correct backpropagation through continuous differentiability. Additionally, matrix operations streamline the computation of derivatives for Legendre polynomials in terms of themselves.

However, the fPINN exhibits limitations, notably the absence of guaranteed convergence in optimization error and relatively lower speed and accuracy compared to other machine learning methods, such as LS-SVM. Future endeavors may extend this method to address diverse types of fractional differential equations, encompassing partial and integral differential equations. Furthermore, exploring the approach for fractional orders greater than two holds promise for advancing research and applications.

\printbibliography
\end{document}